\documentclass{article}

\usepackage{microtype}
\usepackage{graphicx}
\usepackage{grffile}
\usepackage{subcaption}
\usepackage{booktabs} % for professional tables

\usepackage{hyperref}

\usepackage{algorithm}
\usepackage{algorithmic}
\usepackage[accepted]{icml2026}

\usepackage{amsmath}
\usepackage{amssymb}
\usepackage{mathtools}
\usepackage{amsthm}
\usepackage{enumitem}
\usepackage{bm}
\usepackage{url}

\newcommand{\conv}{\mathrm{Conv}}
\newcommand{\ffn}{\mathrm{FFN}}
\newcommand{\moe}{\mathrm{MoE}}

\newcommand{\dff}{d_{\mathrm{ff}}}

\newcommand{\Ktilde}{\widetilde{\mathbf{K}}}
\newcommand{\Lbal}{\mathcal{L}_{\mathrm{bal}}}

\newcommand{\Llm}{\mathcal{L}_{\mathrm{LM}}}
\newcommand{\Wdown}{\mathbf{W}_{\mathrm{down}}}
\newcommand{\Wg}{\mathbf{W}_g}
\newcommand{\indicator}{\mathbb{I}}

\usepackage[capitalize,noabbrev]{cleveref}

\theoremstyle{plain}
\newtheorem{theorem}{Theorem}[section]
\newtheorem{proposition}[theorem]{Proposition}

\newtheorem{corollary}[theorem]{Corollary}
\theoremstyle{definition}

\newtheorem{assumption}[theorem]{Assumption}
\theoremstyle{remark}
\newtheorem{remark}[theorem]{Remark}

\usepackage[textsize=tiny]{todonotes}

\icmltitlerunning{cMoLLM at Scale: Horizontal Scaling Laws for Mixture-of-LLMs}

\begin{document}

\twocolumn[
  \icmltitle{cMoLLM at Scale: Horizontal Scaling Laws for\\
    Convolutionally-Gated Mixture-of-LLMs}

  % It is OKAY to include author information, even for blind submissions: the
  % style file will automatically remove it for you unless you've provided
  % the [accepted] option to the icml2026 package.

  \icmlsetsymbol{equal}{*}
  \icmlsetsymbol{cores}{$\dagger$}

  \begin{icmlauthorlist}
    \icmlauthor{Xin Yang}{zju}
    \icmlauthor{Yemin Wang}{xmu}
    \icmlauthor{Mingda Liu}{ist}
    \icmlauthor{Letian Li}{tbsi}
    \icmlauthor{Shuaishuai Cao}{csu}
    \icmlauthor{Zhengxiao He}{ua}
    \icmlauthor{Ryan Dong}{ind,cores}
  \end{icmlauthorlist}

  \icmlaffiliation{zju}{School of Mathematical Sciences, Zhejiang University}
  \icmlaffiliation{tbsi}{Shenzhen International Graduate School, Tsinghua University}
  \icmlaffiliation{xmu}{Xiamen University}
  \icmlaffiliation{ind}{Independent Researcher}
  \icmlaffiliation{ua}{Department of Electrical and Computer Engineering, University of Arizona}
  \icmlaffiliation{ist}{Institute of Science Tokyo}
  \icmlaffiliation{csu}{School of Central South University}

  \icmlcorrespondingauthor{Ryan Dong}{zwdong618@gmail.com}

  \icmlkeywords{Mixture-of-LLMs, Pipeline-Level Scaling, Dynamic Convolution, Convolutionally-Gated Routing, Large Language Models}

  \vskip 0.3in
]

\printAffiliationsAndNotice{}
\begin{figure*}[!t]
  \centering
  \includegraphics[width=1\textwidth]{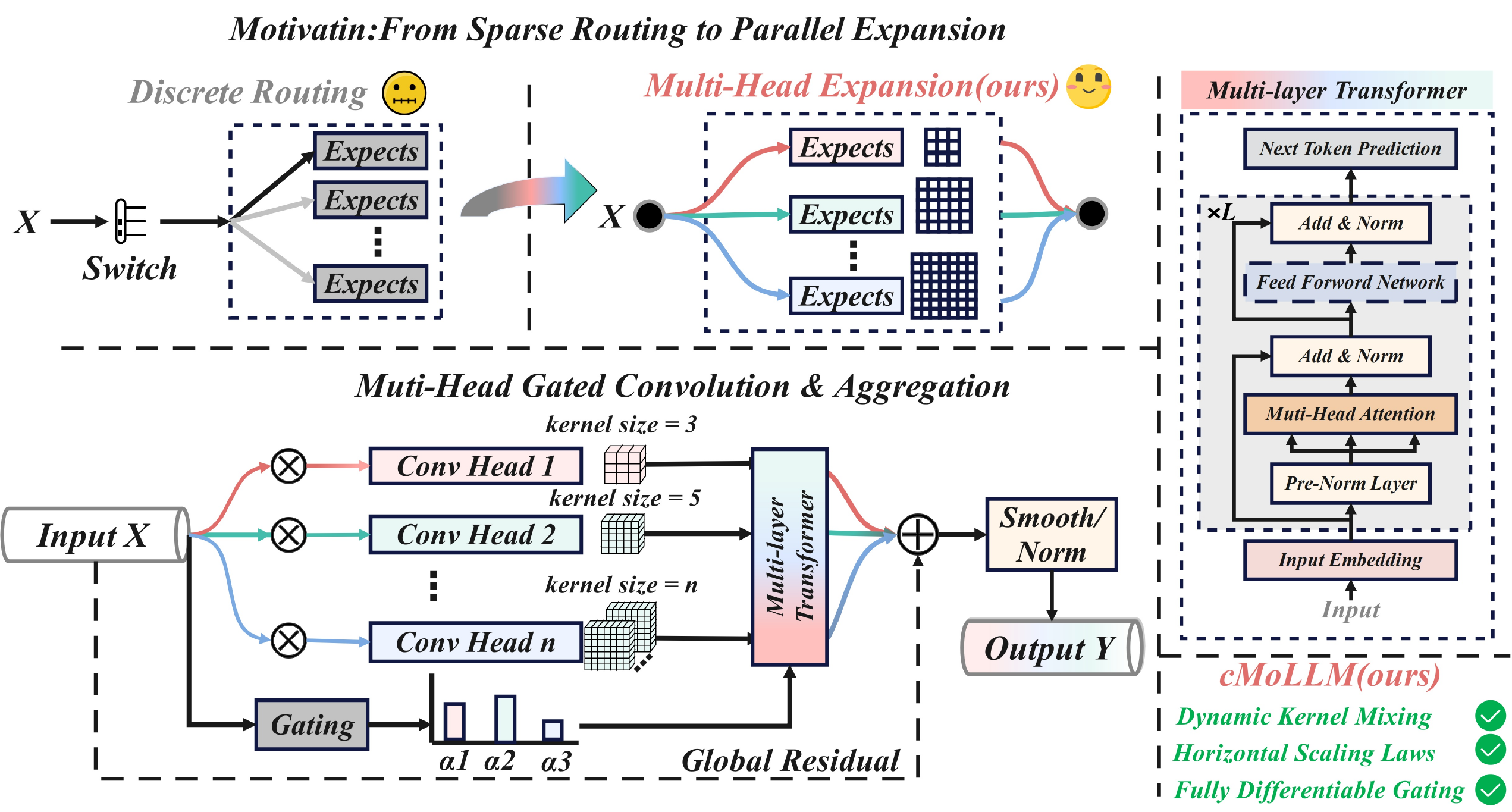}
  \caption{\textbf{Teaser of cMoLLM.} cMoLLM scales capacity at the pipeline level via convolutionally-gated mixture over end-to-end streams, yielding better perplexity and downstream accuracy than dense baselines under matched compute, without virtual tokens or auxiliary heads.}
  \label{fig:teaser}
\end{figure*}
%%%%%%%%%%%%%%%%%%%%%%%%%%%%%%%%%%%%%%%%%%%%%%%%%%%%%%%%%%%%%%%%%%%%%%%%%%%%%%%
% ABSTRACT
%%%%%%%%%%%%%%%%%%%%%%%%%%%%%%%%%%%%%%%%%%%%%%%%%%%%%%%%%%%%%%%%%%%%%%%%%%%%%%%
\begin{abstract}
Scaling large language models (LLMs) has driven their success, yet dense Transformers couple capacity and computation: every parameter is activated for every token, making training and inference costs grow linearly with model size—a critical bottleneck as models approach trillion-parameter regimes.
We aim to scale capacity through MoE-style mixture throughout the LLM pipeline rather than only the FFN. Prior pipeline-level approaches include ParaScale\cite{chen2025parallel}, which introduces virtual tokens and parallel streams but incurs substantial overhead and suffers from homogenized routing and gradient collapse, and AltUp\cite{baykal2023alternating}, which uses an auxiliary prediction branch but offers limited adaptivity and slow convergence.
We establish that MoE-style mixture layers can be reformulated as variable-kernel dynamic convolutions, where each expert corresponds to a $1{\times}1$ convolutional kernel and routing implements input-conditioned kernel aggregation. Building on this equivalence, we introduce cMoLLM: a convolutionally gated mixture-of-LLMs that routes over end-to-end streams through fully differentiable dynamic convolution.
In GPT-2-style models trained on FineWeb, cMoLLM improves language modeling perplexity and downstream GLUE and SQuAD accuracy under matched compute, with better stream utilization, more stable optimization, and favorable scaling compared to ParaScale- and AltUp-style baselines.
\end{abstract}
\vspace{-0.4cm}

\begin{table*}[!t]
  \renewcommand{\arraystretch}{1.1}
  \caption{Systematic qualitative comparison of pipeline-level capacity scaling methods: ParaScale, AltUp, and cMoLLM.}
  \label{tab:qualitative_compare}
  \vskip 0.1in
  \begin{center}
  \begin{normalsize}
  \begin{tabular}{@{}lccccc@{}}
    \toprule
    \textbf{Method} & \textbf{Extra tok.} & \textbf{Aux.\ branch} & \textbf{Routing} & \textbf{Compute} & \textbf{Stability} \\
    \midrule
    ParaScale & Yes & No  & Parallel streams & Higher & Collapse risk \\
    AltUp     & No  & Yes & Fixed pred.\ branch & Low & Slow convergence \\
    cMoLLM & No  & No  & Dynamic conv. & $\sim$dense & Stable \\
    \bottomrule
  \end{tabular}
  \end{normalsize}
  \end{center}
  \vskip -0.1in
\end{table*}
%%%%%%%%%%%%%%%%%%%%%%%%%%%%%%%%%%%%%%%%%%%%%%%%%%%%%%%%%%%%%%%%%%%%%%%%%%%%%%%
% 1. INTRODUCTION
%%%%%%%%%%%%%%%%%%%%%%%%%%%%%%%%%%%%%%%%%%%%%%%%%%%%%%%%%%%%%%%%%%%%%%%%%%%%%%%
\section{Introduction}
\label{sec:intro}

The remarkable success of large language models (LLMs)~\cite{gpt41_2025,claude4_2025,DBLP:conf/smc/YangTWJJ25} is closely tied to scale: increasing model capacity consistently improves performance across diverse tasks~\citep{radford2019language,brown2020language,kaplan2020scaling,hoffmann2022training,DBLP:journals/corr/abs-2603-03314,DBLP:journals/corr/abs-2510-13291}.
Training and inference cost grow roughly linearly with model size, as every parameter is activated for every token.
As models approach trillion-parameter regimes, this linear compute--capacity coupling has become a dominant bottleneck, motivating the search for more parameter-efficient ways to expand capacity.

A natural direction is to use a Mixture-of-Experts (MoE)--style approach to scale capacity: route tokens across multiple sub-models and combine their outputs.
Most prior MoE work applies this only to the \textbf{feed-forward network (FFN)}: experts are FFN blocks, and routing is typically discrete (e.g., Top-$K$), which leads to expert collapse, skewed utilization, and brittle training~\citep{shazeer2017outrageously,fedus2022switch}.
We instead pursue pipeline-level mixture: routing across entire LLM pipelines (or stream-wise sub-models) rather than FFN-only experts.

Two works are directly relevant.
\textbf{AltUp}~\citep{baykal2023alternating} widens token representations and uses a virtual prediction branch to update inactive blocks; it increases effective capacity with small parameter overhead but relies on a fixed, hand-designed structure that is not fully adaptive and often converges slowly.
This increases sequence length and thus compute; moreover, routing can homogenize across streams, leading to gradient collapse and training instability.
We seek a new pipeline-level scaling mechanism that keeps the MoE-style routing across sub-models intuition, avoids virtual tokens, auxiliary prediction heads, and the pitfalls above.

Our starting point is a theoretical insight: MoE-style mixture layers can be exactly rewritten as dynamic convolutions with input-dependent (variable) kernels.
Formally, each expert corresponds to a $1{\times}1$ convolutional kernel; the router performs input-dependent mixing of these kernels (\cref{thm:equiv}).
This establishes MoE--dynamic convolution equivalence and gives a unified lens for analyzing sparse, conditional computation.
We then approximate this ideal: each ``stream'' uses a distinct convolutional kernel, and we mix streams via soft, fully differentiable gating---no Top-$K$, no low-rank factorization.

Building on this, we propose \textbf{cMoLLM}: a convolutionally-gated mixture-of-LLMs that applies conditional computation to the entire Transformer pipeline (\cref{fig:teaser}).
We maintain a small set of end-to-end ``streams,'' each associated with its own $1{\times}1$ kernel; a lightweight gating network produces input-dependent mixture weights, and the mixed kernel is applied via standard (grouped) pointwise convolution.
All streams are combined through a stable, fully differentiable dynamic convolution---no virtual tokens, no auxiliary heads, no Top-$K$ or low-rank---yielding parameter-efficient pipeline-level capacity scaling with hardware-friendly convolutional primitives.

Our contributions are as follows:
\begin{itemize}[leftmargin=*,itemsep=2pt,topsep=2pt]
  \item \textbf{Theoretical:} We establish the formal equivalence between MoE-style mixture layers and dynamic convolutions with variable kernels (\cref{thm:equiv}). This provides a unified theoretical framework for analyzing and designing sparse, conditional computation models, beyond FFN-level MoE.
  \item \textbf{Methodological:} We introduce \textbf{cMoLLM}, a pipeline-level convolutionally-gated mixture whose kernel-sharing and stream structure follow directly from the convolution view. It achieves parameter-efficient scaling and training stability without virtual tokens, auxiliary prediction branches, Top-$K$ routing, or low-rank factorization.
  \item \textbf{Empirical:} On GPT-2--style models trained on FineWeb, cMoLLM matches or improves perplexity, GLUE, and SQuAD over dense baselines under comparable cost, with better stream utilization, more stable training dynamics, and favorable scaling (see \cref{sec:scaling_analysis}) compared to ParaScale- and AltUp-style pipeline mixtures.
\end{itemize}

\begin{figure*}[!t]
  \vskip 0.1in
  \begin{center}
    \includegraphics[width=2.0\columnwidth]{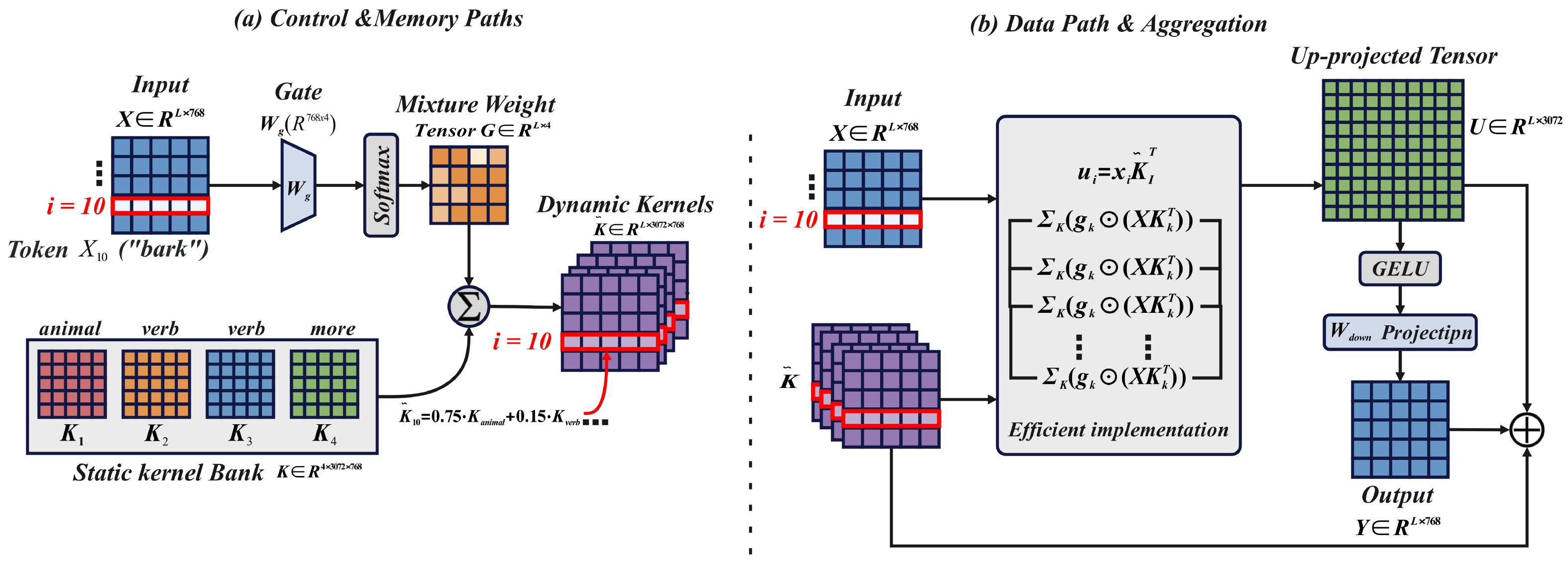}
    \caption{\textbf{Architecture (Fig.~2).} cMoLLM block: input $\mathbf{X}$ passes through a gating network to produce soft mixture weights $\{g_k\}$; each stream has a $1{\times}1$ kernel $\mathbf{K}_k$; the effective kernel $\widetilde{\mathbf{K}}(\mathbf{x}) = \sum_k g_k(\mathbf{x})\mathbf{K}_k$ is applied via grouped $1{\times}1$ convolution. No virtual tokens, no Top-$K$, no auxiliary heads.}
    \label{fig:cMoLLM}
    \vspace{-0.3cm}
  \end{center}
  \vskip -0.15in
\end{figure*}
%%%%%%%%%%%%%%%%%%%%%%%%%%%%%%%%%%%%%%%%%%%%%%%%%%%%%%%%%%%%%%%%%%%%%%%%%%%%%%%
% 2. RELATED WORK
%%%%%%%%%%%%%%%%%%%%%%%%%%%%%%%%%%%%%%%%%%%%%%%%%%%%%%%%%%%%%%%%%%%%%%%%%%%%%%%
\section{Related Work}
\label{sec:related}

\noindent\textbf{Mixture-of-Experts and Pipeline-Level Scaling.}
Classical MoE approaches~\citep{shazeer2017outrageously,fedus2022switch,lepikhin2020gshard} sparsify the \textbf{feed-forward network (FFN)} layer using Top-$K$ routing, where each token is routed to a sparse subset of expert FFN blocks.
Large-scale MoE systems such as GLaM~\citep{du2022glam}, BASE Layers~\citep{lewis2021base}, DeepSpeed-MoE~\citep{rajbhandari2022deepspeed}, and Vision MoE models~\citep{riquelme2021scaling} focus primarily on FFN-level sparsity and system-level optimizations for training and inference.
In contrast, we target pipeline-level mixture over entire LLM streams, leveraging the MoE--dynamic convolution equivalence (\cref{sec:equiv}) to design convolutionally-gated routing without Top-$K$ truncation.
The most directly relevant prior work includes \textbf{ParaScale}~\citep{chen2025parallel}, which scales capacity via parallel virtual token streams but increases compute and suffers from gradient collapse, and \textbf{AltUp}~\citep{baykal2023alternating}, which uses a hand-designed auxiliary prediction branch but converges slowly.
\Cref{tab:qualitative_compare} provides a systematic comparison; cMoLLM avoids virtual tokens and auxiliary heads, using fully differentiable dynamic convolution for stable, parameter-efficient scaling.
% \vspace{-0.1cm}

\noindent\textbf{Dynamic Convolution and Conditional Computation.}
Dynamic convolution~\citep{jia2016dynamic,yang2019condconv,chen2020dynamic,wu2019pay,ma2020weightnet,zhang2020dynet} adapts convolution kernels based on input, typically within CNN backbones or sequence models.
More broadly, conditional computation and routing networks~\citep{bengio2013estimating,rosenbaum2017routing,mcgill2017deciding} learn to select computation paths depending on inputs.
Our key theoretical contribution is establishing the formal equivalence between MoE-style mixture layers and dynamic convolutions with variable kernels (\cref{thm:equiv}), which enables us to design a pipeline-level, convolutionally-gated mixture over LLM streams with explicit probabilistic routing and load-balancing objectives.

\noindent\textbf{Parameter-Efficient Methods and Surveys.}
Parameter-efficient fine-tuning methods such as adapters~\citep{houlsby2019parameter} and LoRA~\citep{hu2022lora} add or reparameterize a small set of trainable weights on top of frozen backbones, primarily targeting efficient fine-tuning.
cMoLLM instead modifies the pretraining architecture to improve the capacity--compute trade-off; these approaches are complementary and could in principle be combined.
Comprehensive surveys of MoE models and their applications in LLMs and beyond are provided by \citet{cai2025survey,mu2025comprehensive,liu2024survey}, which situate our contribution within the broader MoE landscape.

%%%%%%%%%%%%%%%%%%%%%%%%%%%%%%%%%%%%%%%%%%%%%%%%%%%%%%%%%%%%%%%%%%%%%%%%%%%%%%%
% 3. PRELIMINARIES
%%%%%%%%%%%%%%%%%%%%%%%%%%%%%%%%%%%%%%%%%%%%%%%%%%%%%%%%%%%%%%%%%%%%%%%%%%%%%%%
\section{Preliminaries}
\label{sec:prelim}

We first introduce the notation used throughout the paper and briefly review the core concepts that underpin our formulation.

\subsection{Notation and Assumptions}
\label{sec:notation}

We use bold lowercase letters (e.g., $\mathbf{x}$) for vectors, bold uppercase letters (e.g., $\mathbf{W}$) for matrices, and calligraphic letters (e.g., $\mathcal{S}$) for sets.
The hidden dimension of the Transformer is $d$, the intermediate FFN dimension is $\dff$, the sequence length is $L$ and the number of experts/streams is $N$.
For $\mathbf{X} \in \mathbb{R}^{L \times d}$, the $i$-th token is denoted by $\mathbf{x}_i$.

Throughout, we make the following mild assumptions.

\begin{assumption}[Bounded Inputs]
\label{ass:bounded}
There exists $R > 0$ such that $\|\mathbf{x}\|_2 \le R$ for all token representations $\mathbf{x}$ encountered during training and evaluation.
\end{assumption}

\begin{assumption}[Lipschitz Nonlinearity]
\label{ass:lipschitz}
The activation function $\sigma$ is $L_\sigma$-Lipschitz and satisfies $\sigma(0)=0$ (e.g., ReLU or GELU).
\end{assumption}

These conditions are standard in theoretical analyses of deep networks and MoE
architectures, and they suffice for establishing the complexity and stability
results in this paper.

\subsection{Transformer Feed-Forward Network}
A standard Transformer FFN applies two linear projections with a nonlinearity~\citep{vaswani2017attention}:
\begin{equation}
  \ffn(\mathbf{x}) = \mathbf{W}_2 \, \sigma(\mathbf{W}_1 \mathbf{x} + \mathbf{b}_1) + \mathbf{b}_2,
  \label{eq:ffn}
\end{equation}
where $\mathbf{x} \in \mathbb{R}^d$ is the input token representation, $\mathbf{W}_1 \in \mathbb{R}^{\dff \times d}$, $\mathbf{W}_2 \in \mathbb{R}^{d \times \dff}$, and $\sigma$ is typically GELU or ReLU. The FFN is applied independently to each token.

\subsection{Mixture-of-Experts Layer}

An MoE layer replaces a single feed-forward network with a collection of $N$
experts $\{E_k\}_{k=1}^N$ and a gating function $G$ that routes each input to a
subset of experts~\citep{shazeer2017outrageously}:
\begin{equation}
  \moe(\mathbf{x})
  = \sum_{k=1}^{N} g_k(\mathbf{x})\,E_k(\mathbf{x}),
  \label{eq:moe}
\end{equation}
where $g(\mathbf{x})\in\mathbb{R}^N$ denotes the routing weights produced by the
gate. In practice, only the top-$K$ entries of $g(\mathbf{x})$ are nonzero,
yielding sparse computation. Each expert $E_k$ is typically a two-layer FFN with
its own parameters.

\subsection{Pointwise Convolution as a Linear Transform}

A pointwise ($1{\times}1$) convolution applies the same linear map across token
positions. Given an input sequence $\mathbf{X}\in\mathbb{R}^{L\times d_{\mathrm{in}}}$
and a kernel $\mathbf{K}\in\mathbb{R}^{d_{\mathrm{out}}\times d_{\mathrm{in}}}$,
the operation is given by
\begin{equation}
  \conv_{1{\times}1}(\mathbf{X}; \mathbf{K})
  = \mathbf{X}\mathbf{K}^\top,
  \label{eq:conv1x1}
\end{equation}
which corresponds to applying an identical token-wise linear projection from
$\mathbb{R}^{d_{\mathrm{in}}}$ to $\mathbb{R}^{d_{\mathrm{out}}}$.
This observation serves as a key building block for our subsequent analysis.

%%%%%%%%%%%%%%%%%%%%%%%%%%%%%%%%%%%%%%%%%%%%%%%%%%%%%%%%%%%%%%%%%%%%%%%%%%%%%%%
% 4. METHOD: cMoLLM
%%%%%%%%%%%%%%%%%%%%%%%%%%%%%%%%%%%%%%%%%%%%%%%%%%%%%%%%%%%%%%%%%%%%%%%%%%%%%%%
\section{Method: cMoLLM}
\label{sec:method}

We now present our main theoretical result and the cMoLLM architecture.

\subsection{MoE as Dynamic Convolution: A Formal Equivalence}
\label{sec:equiv}

We show that MoE layers admit an equivalent interpretation as dynamic pointwise
convolutions. In this subsection we focus on linear experts and the structure of the router; extensions to nonlinear experts are discussed in \cref{app:proof} and summarized in \cref{cor:nonlinear}.

\begin{theorem}[MoE--Dynamic Convolution Equivalence]
\label{thm:equiv}
Let $\mathbf{x}\in\mathbb{R}^{d}$ denote a token representation.
Consider an MoE layer with $N$ linear experts
$E_k(\mathbf{x})=\mathbf{x}\mathbf{W}_k^\top$, where
$\mathbf{W}_k\in\mathbb{R}^{d_{\mathrm{out}}\times d}$,
and routing weights $\{g_k(\mathbf{x})\}_{k=1}^N$ satisfying
$\sum_{k=1}^N g_k(\mathbf{x})=1$ for all $\mathbf{x}$.
Then the MoE output can be written as a dynamic $1{\times}1$ convolution:
\begin{equation}
  \moe(\mathbf{x})
  = \conv_{1{\times}1}\!\Bigl(
  \mathbf{x};\, \tilde{\mathbf{K}}(\mathbf{x})
  \Bigr)
  = \mathbf{x}\,\tilde{\mathbf{K}}(\mathbf{x})^\top,
  \label{eq:equiv}
\end{equation}
where the effective kernel is given by
\begin{equation}
  \tilde{\mathbf{K}}(\mathbf{x})
  := \sum_{k=1}^{N} g_k(\mathbf{x})\,\mathbf{W}_k.
\end{equation}
Since $\tilde{\mathbf{K}}(\mathbf{x})$ depends on the input $\mathbf{x}$, the
MoE layer is precisely a form of dynamic convolution.
\end{theorem}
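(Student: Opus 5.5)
The plan is a direct algebraic verification. Fix a token $\mathbf{x}\in\mathbb{R}^{d}$, treated as a $1\times d$ row vector so that $\mathbf{x}\mathbf{W}_k^\top\in\mathbb{R}^{1\times d_{\mathrm{out}}}$ matches the convention of \eqref{eq:conv1x1}. The key point is that, once $\mathbf{x}$ is fixed, the routing weights $g_k(\mathbf{x})$ are just scalars. Starting from the definition \eqref{eq:moe}, I will substitute the linear experts to obtain $\moe(\mathbf{x})=\sum_{k} g_k(\mathbf{x})\,\mathbf{x}\mathbf{W}_k^\top$.

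Next I will use bilinearity of matrix multiplication and linearity of the transpose. Scalars commute with matrices, so each term can be written as $g_k(\mathbf{x})\,\mathbf{x}\mathbf{W}_k^\top=\mathbf{x}\bigl(g_k(\mathbf{x})\mathbf{W}_k\bigr)^\top$. Left-multiplication by $\mathbf{x}$ distributes over finite sums. This gives $\moe(\mathbf{x})=\mathbf{x}\bigl(\sum_k g_k(\mathbf{x})\mathbf{W}_k\bigr)^\top=\mathbf{x}\,\tilde{\mathbf{K}}(\mathbf{x})^\top$. By \eqref{eq:conv1x1} with $L=1$, this is exactly $\conv_{1\times1}(\mathbf{x};\tilde{\mathbf{K}}(\mathbf{x}))$, which establishes \eqref{eq:equiv}.

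I will then make three remarks. First, the row-wise version for a whole sequence $\mathbf{X}\in\mathbb{R}^{L\times d}$ follows token by token. The kernel now varies per row, so the result is a position-wise dynamic $1\times1$ convolution rather than a single shared kernel. Second, the normalization $\sum_k g_k(\mathbf{x})=1$ is not needed for the identity itself. Its role is interpretive: it makes $\tilde{\mathbf{K}}(\mathbf{x})$ a convex combination of the $\mathbf{W}_k$ whenever $g_k\ge 0$. In particular, $\tilde{\mathbf{K}}(\mathbf{x})$ lies in the convex hull of the expert kernels, so its norm is at most $\max_k\|\mathbf{W}_k\|$, which is useful for the later stability results. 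Third, the top-$K$ sparse case is included automatically, since the zeroed weights simply drop out of the sum.

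There is no genuine obstacle; the only delicate step is being careful about conventions. This means consistently using the row-vector convention, and noting that $\tilde{\mathbf{K}}$ depends on $\mathbf{x}$ only through the scalars $g_k$. The latter is what makes the layer a dynamic convolution rather than a fixed one. It also explains why linearity of the experts is essential: for nonlinear experts, $\sigma$ does not commute with the mixing. That case is deferred to \cref{cor:nonlinear}.
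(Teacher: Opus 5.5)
Your proposal is correct and follows essentially the same argument as the paper: treat the $g_k(\mathbf{x})$ as scalars, pull them inside the transpose, and factor $\mathbf{x}$ out by linearity to get $\mathbf{x}\,\tilde{\mathbf{K}}(\mathbf{x})^\top = \conv_{1\times1}(\mathbf{x};\tilde{\mathbf{K}}(\mathbf{x}))$. Your side remarks are also accurate: the normalization $\sum_k g_k(\mathbf{x})=1$ is not needed for the identity, and over a sequence the kernel varies per token.
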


\begin{proof}
By linearity of matrix multiplication:
\begin{align}
\sum_{k=1}^{N} g_k(\mathbf{x}) \cdot (\mathbf{x} \mathbf{W}_k^\top) 
&= \mathbf{x} \Bigl( \textstyle\sum_{k=1}^{N} g_k(\mathbf{x}) \mathbf{W}_k \Bigr)^{\!\top} \nonumber \\
&= \conv_{1{\times}1}\!\Bigl(\mathbf{x};\, \textstyle\sum_{k=1}^{N} g_k(\mathbf{x}) \mathbf{W}_k \Bigr).
\end{align}
The weighted sum of expert weight matrices forms a dynamic kernel that varies with $\mathbf{x}$. (Classical MoE uses Top-$K$ routing so only $K$ terms are nonzero; we use soft mixing over all streams.) See \cref{app:proof} for extension to nonlinear experts.
\renewcommand{\qedsymbol}{}
\end{proof}

\begin{corollary}[Nonlinear Two-Layer Experts]
\label{cor:nonlinear}
Under \cref{ass:lipschitz}, consider two-layer experts of the form
$E_k(\mathbf{x}) = \sigma(\mathbf{x} \mathbf{W}_{k,1}^\top)\mathbf{W}_{k,2}^\top$
with routing weights $\{g_k(\mathbf{x})\}_{k=1}^N$ satisfying $\sum_k g_k(\mathbf{x}) = 1$.
Then the MoE layer can still be written as a dynamic $1{\times}1$ convolution
with an input-dependent effective kernel that incorporates a data-dependent
mask induced by $\sigma$; see \cref{app:proof} for details.
\end{corollary}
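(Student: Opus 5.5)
The plan is to reduce the nonlinear case to \cref{thm:equiv} by freezing the activation pattern at the given input. For a fixed token $\mathbf{x}$ and expert $k$, set $\mathbf{z}_k := \mathbf{x}\mathbf{W}_{k,1}^\top \in \mathbb{R}^{\dff}$. Define the diagonal mask $\mathbf{D}_k(\mathbf{x}) := \mathrm{diag}(m_{k,1},\dots,m_{k,\dff})$ by $m_{k,j} := \sigma(z_{k,j})/z_{k,j}$ when $z_{k,j}\neq 0$, and $m_{k,j}:=0$ otherwise. Because $\sigma(0)=0$ by \cref{ass:lipschitz}, this definition gives the coordinatewise identity $\sigma(\mathbf{z}_k) = \mathbf{z}_k\mathbf{D}_k(\mathbf{x})$ with no exceptions. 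For ReLU, $m_{k,j}=\indicator[z_{k,j}>0]$, which is exactly the ``data-dependent mask'' mentioned in the statement. For general $\sigma$, the Lipschitz property gives $|m_{k,j}|\le L_\sigma$, so the mask entries stay bounded.

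Next I would substitute this identity into the expert. This gives $E_k(\mathbf{x}) = \mathbf{x}\mathbf{W}_{k,1}^\top\mathbf{D}_k(\mathbf{x})\mathbf{W}_{k,2}^\top = \mathbf{x}\,\mathbf{M}_k(\mathbf{x})^\top$, where $\mathbf{M}_k(\mathbf{x}) := \mathbf{W}_{k,2}\mathbf{D}_k(\mathbf{x})\mathbf{W}_{k,1}\in\mathbb{R}^{d_{\mathrm{out}}\times d}$. The nonlinear expert is therefore a linear map whose matrix depends on the input. Taking the gated sum and using linearity, exactly as in the proof of \cref{thm:equiv}, gives
\[
\moe(\mathbf{x}) = \mathbf{x}\,\tilde{\mathbf{K}}(\mathbf{x})^\top = \conv_{1{\times}1}\bigl(\mathbf{x};\tilde{\mathbf{K}}(\mathbf{x})\bigr),
\qquad
\tilde{\mathbf{K}}(\mathbf{x}) := \sum_{k=1}^N g_k(\mathbf{x})\,\mathbf{W}_{k,2}\mathbf{D}_k(\mathbf{x})\mathbf{W}_{k,1}.
\]
This is the dynamic-kernel representation the statement asks for. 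The normalization $\sum_k g_k=1$ is not needed for the identity itself. It only lets us read $\tilde{\mathbf{K}}$ as a convex combination of the per-expert kernels.

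As a supporting remark, I would bound the operator norm: $\|\tilde{\mathbf{K}}(\mathbf{x})\|\le L_\sigma\max_k\|\mathbf{W}_{k,2}\|\|\mathbf{W}_{k,1}\|$. This uses $\sum_k g_k = 1$ and assumes the gates are also nonnegative, as with softmax routing; for a general signed gate one would need $\sum_k |g_k|$ in place of 1. The bound shows the effective kernel stays controlled, consistent with \cref{ass:bounded}.

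The main obstacle is making the mask well defined and honest. The mask must be defined at zero preactivations, and that is exactly where $\sigma(0)=0$ is used. One must also concede that $\tilde{\mathbf{K}}$ now depends on $\mathbf{x}$ through both the gate and $\mathbf{D}_k$. The equivalence is therefore pointwise, with a kernel that is piecewise constant for ReLU, rather than a fixed kernel bank mixed linearly. I would state this explicitly so the corollary is not read as claiming more than this representation.
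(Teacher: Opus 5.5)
Your proof is correct and follows the paper's mechanism in \cref{app:proof}. You absorb the activation into a diagonal matrix acting on the preactivation row vector, fold it into the per-expert kernel $\mathbf{W}_{k,2}\mathbf{D}_k(\mathbf{x})\mathbf{W}_{k,1}$, and then combine with the gates by linearity exactly as in \cref{thm:equiv}.

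The difference is scope. The paper carries this out only for $\sigma=\mathrm{id}$ and $\sigma=\mathrm{ReLU}$, using the binary mask $\mathrm{diag}(\indicator(\mathbf{x}\mathbf{W}_{k,1}^\top>0))$, and never actually uses \cref{ass:lipschitz}. Your secant-slope mask $m_{k,j}=\sigma(z_{k,j})/z_{k,j}$ covers every activation that assumption allows. For GELU it is, off zero preactivations, the soft gate $\Phi(z_{k,j})$ with $\Phi$ the Gaussian CDF, and for ReLU it reduces to the paper's indicator mask.

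Your route also shows what the hypotheses are for:
\begin{itemize}
\item $\sigma(0)=0$ is exactly what makes the identity hold at zero preactivations. Without it, the MoE output at $\mathbf{x}=\mathbf{0}$ is generally nonzero, so no bias-free representation $\mathbf{x}\tilde{\mathbf{K}}(\mathbf{x})^\top$ can exist.
\item The Lipschitz constant bounds the mask entries, which gives your operator-norm bound.
\end{itemize}

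In return, the paper's ReLU-specific version gives a binary, piecewise-constant mask. That is what supports its reading of each expert as a GLU-like unit with two-level (outer $g_k$, inner on/off mask) gating. For general $\sigma$ your ``mask'' is a real-valued slope, so that interpretation survives only in softened form.

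You are also right that $\sum_k g_k=1$ plays no role in the identity itself, and that nonnegative gates are needed for the convex-combination norm bound. The paper leaves both points implicit.
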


Assumptions~\ref{ass:bounded} and~\ref{ass:lipschitz} are not needed for \cref{thm:equiv} itself, but they will be used in our later complexity, stability, and toy-model analyses that build on this equivalence.

\begin{remark}[Key Differences from Standard Convolution]
\label{rem:diff}
While \cref{thm:equiv} reveals structural similarity, standard convolutional layers differ from explicit MoE in three aspects:
\begin{enumerate}[leftmargin=*,itemsep=2pt]
  \item \textbf{Static vs.\ Learnable Gating.} In CNN, activation functions (e.g., ReLU) act as fixed, non-learnable gates determined by the sign of pre-activations. In MoE, the router is a trainable subnetwork whose routing policy is learned end-to-end.
  \item \textbf{Unnormalized vs.\ Normalized Weights.} CNN activations are unnormalized; MoE gating weights satisfy $\sum_k g_k = 1$, representing a probability distribution over experts.
 \item \textbf{Dense vs.\ Sparse Computation.}
Standard CNN layers apply all filters densely; classical MoE routes each token to only the top-$K$ experts. We use soft mixture weights over all streams (no Top-$K$), so computation scales with $N$ streams but remains a single dynamic convolution.
\end{enumerate}
\end{remark}

This equivalence motivates our approach: we implement pipeline-level mixture as an explicit dynamic convolution, leveraging efficient convolutional primitives.

\subsection{cMoLLM Block Design}
\label{sec:cMoLLM_block}

Based on \cref{thm:equiv}, we design \textbf{cMoLLM} as a pipeline-level mixture. \Cref{fig:cMoLLM} illustrates the architecture: gating, per-stream kernels, and dynamic convolution.

\begin{figure*}[!t]
  \centering
  \includegraphics[width=1\textwidth]{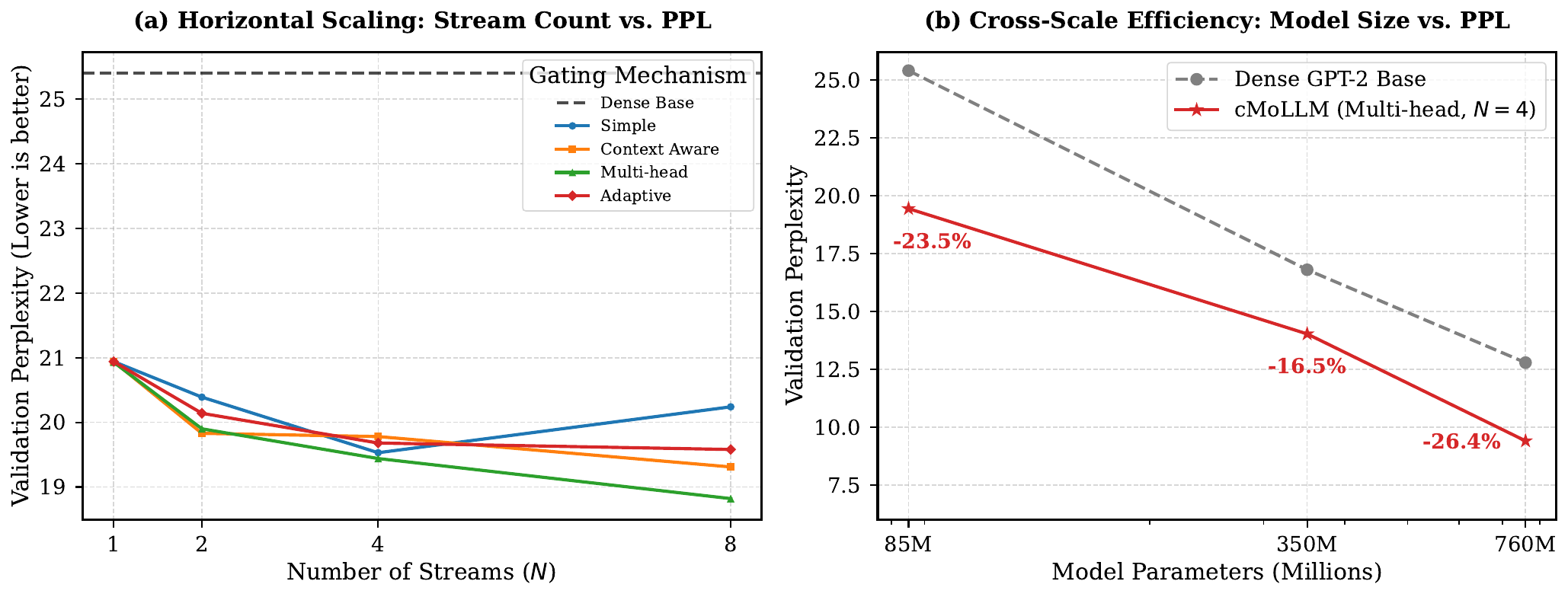}
\caption{\textbf{Experimental results (Fig.~3).} Left: per-stream gating weight distribution across layers. Right: validation loss and perplexity vs.\ stream count $n$ or training steps. cMoLLM maintains stable utilization and scaling gains.}
  \label{fig:exp_figures}
\end{figure*}

\noindent\textbf{Gating Network.}
Given input $\mathbf{X} \in \mathbb{R}^{L \times d}$, a lightweight MLP produces routing logits:
\begin{equation}
\begin{aligned}
  % \mathbf{H} = \mathbf{X} \Wg + \mathbf{b}_g \in \mathbb{R}^{L \times N},
  \mathbf{H}_{\text{flat}} = \text{Flatten}(\mathbf{Y}_{\text{streams}}) \in \mathbb{R}^{L \times (N \cdot d)},\\
  \mathbf{H}_{\text{mid}} = \sigma\bigl(\mathbf{H}_{\text{flat}} \mathbf{W}_g^{(1)} + \mathbf{b}_g^{(1)}\bigr) \in \mathbb{R}^{L \times d},\\
  \mathbf{H}_{\text{logits}} = \mathbf{H}_{\text{mid}} \mathbf{W}_g^{(2)} + \mathbf{b}_g^{(2)} \in \mathbb{R}^{L \times N},
\end{aligned}
\end{equation}
where $\Wg \in \mathbb{R}^{d \times N}$. We apply a softmax over streams to obtain mixture weights:
\begin{equation}
  g_k(\mathbf{x}) = \mathrm{softmax}(\mathbf{h})_k, \quad k \in [N].
  \label{eq:gating}
\end{equation}
This fully differentiable gating avoids discrete Top-$K$ truncation; all streams participate via soft weights.
We support multiple gating variants (\texttt{simple}, \texttt{context\_aware}, \texttt{multi\_head}, \texttt{adaptive}), evaluated in \cref{sec:experiments}.

\noindent\textbf{Expert Kernel Generator.}
Each stream $k$ has its own $1{\times}1$ convolutional kernel $\mathbf{K}_k \in \mathbb{R}^{\dff \times d}$. The effective kernel is their mixture weighted by $g_k(\mathbf{x})$; we use no low-rank factorization or shared base in our main setup.

\noindent\textbf{Dynamic Grouped Convolution.}
Let a set of mixture weights \(\{g_k(\mathbf{x})\}_{k=1}^N\) be given. We form a dynamic mixture of convolutional kernels by
\begin{equation}
\tilde{\mathbf{K}}(\mathbf{x}) = \sum_{k=1}^N g_k(\mathbf{x})\, \mathbf{K}_k .
\end{equation}
Practically, we compute branch-wise outputs in parallel and fuse them:
\begin{equation}
\mathbf{X}_k = \mathrm{Conv}_k(\mathbf{X}; \mathbf{K}_k), \quad k \in [N],
\end{equation}
and concatenate the per-branch outputs along the representation dimension:
\begin{equation}
\mathbf{X}_{\text{input}} = [\mathbf{X}_1, \mathbf{X}_2, \ldots, \mathbf{X}_N] \in \mathbb{R}^{L \times N \times d},
\end{equation}
where \(L\) is the sequence length and \(d\) is the feature dimension (hidden size).

To model interactions among the branches, we apply a transformer block to the stream of parallel outputs:
\begin{equation}
\mathbf{Y}_{\mathrm{stream}} = \mathrm{TRANSFORMERS}\bigl(\mathbf{X}_{\text{input}};\; \tilde{\mathbf{K}}(\mathbf{X})\bigr).
\end{equation}
Here \(\mathrm{TRANSFORMERS}(\cdot)\) denotes one or more Transformer layers, and the attention (or feed-forward) computations may be conditioned on the dynamic kernels \(\tilde{\mathbf{K}}(\mathbf{X})\).

\noindent\textbf{ParaScale Output Aggregation.}
Let the hidden states have shape \(\mathbf{H} \in \mathbb{R}^{S \times B \times H}\). If the ParaScale module uses \(N_{\mathrm{ps}} > 1\) parallel scales, we first reorganize the hidden states along the parallel dimension:
\begin{equation}
\mathbf{H} \rightarrow \hat{\mathbf{H}} = \operatorname{reshape}\bigl(\mathbf{H},\, S,\, B/N_{\mathrm{ps}},\, N_{\mathrm{ps}},\, H\bigr),
\end{equation}
and flatten the parallel axis:
\begin{equation}
\mathbf{H}_{\text{flat}} = \operatorname{reshape}\bigl(\hat{\mathbf{H}},\, S,\, B/N_{\mathrm{ps}},\, N_{\mathrm{ps}}\,H\bigr).
\end{equation}
We compute an aggregation weight via a learned layer:
\begin{equation}
\boldsymbol{\alpha}_{s,b} = \operatorname{softmax}\bigl(\mathbf{W}_{\mathrm{agg}}\, \mathbf{H}_{\text{flat}}^{s,b,*} \bigr) \in \mathbb{R}^{N_{\mathrm{ps}}}, \quad \|\boldsymbol{\alpha}_{s,b}\|_{1} = 1 .
\end{equation}

\noindent\textbf{Smoothness and Gradient-Aware Measures.}
1) Per-step attention smoothing. Let \(\boldsymbol{\alpha}_{s,b}\) be the per-step, per-sample attention over the \(N_{\mathrm{ps}}\) parallel scales. We define the smoothed attention as
\begin{equation}
\boldsymbol{\alpha}_{s,b}^{\,\text{smooth}} = (1-\beta)\,\boldsymbol{\alpha}_{s,b} + \beta\,\frac{1}{N_{\mathrm{ps}}}\,\mathbf{1}, \qquad
\beta \in [0,1].
\label{eq:alpha_smooth}
\end{equation}

2) Gradient-aware aggregation. The weighted sum uses the smoothed weights:
\begin{equation}
\mathbf{H}_{\mathrm{weighted}}^{s,b} = \sum_{i=1}^{N_{\mathrm{ps}}} \alpha_{s,b}^{(i)\,\text{smooth}}\, \mathbf{H}_{\text{flat}}^{s,b,i},
\label{eq:weighted}
\end{equation}
followed by a linear projection and a mean residual term:
\begin{equation}
\mathbf{H}_{\mathrm{agg}}^{s,b} = \mathbf{F}\,\mathbf{H}_{\mathrm{weighted}}^{s,b} + \mathbf{H}_{\text{mean}}^{s,b}, \quad
\mathbf{H}_{\mathrm{out}}^{s,b} = \mathrm{Proj}\bigl(\mathbf{H}_{\mathrm{agg}}^{s,b}\bigr).
\label{eq:agg_out}
\end{equation}

\subsection{cMoLLM Forward Pass: Pseudocode}
\label{sec:pseudocode}

\Cref{alg:cMoLLM} gives pseudocode for a single Transformer layer with a cMoLLM block (no Top-$K$, no low-rank).

\begin{algorithm}[t]
  \caption{Forward pass of a Transformer layer with cMoLLM}
  \label{alg:cMoLLM}
  \begin{algorithmic}[1]
    \STATE \textbf{Input:} $\mathbf{X} \in \mathbb{R}^{L \times d}$, number of streams $N$, kernels $\{\mathbf{K}_k\}_{k=1}^N$, $\Wdown$
    \STATE \textbf{Self-Attention:} $\mathbf{H} \leftarrow \mathrm{SelfAttention}(\mathbf{X})$
    \STATE \textbf{Gating logits:} $\mathbf{G} \leftarrow \mathbf{H} \Wg + \mathbf{b}_g \in \mathbb{R}^{L \times N}$
    \STATE \textbf{Soft mixture weights:} $g_{i,k} \leftarrow \mathrm{softmax}(\mathbf{G}_{i,:})_k$ for each token $i$, stream $k$
    \STATE \textbf{Mixed kernel:} $\Ktilde_i \leftarrow \sum_{k=1}^{N} g_{i,k} \, \mathbf{K}_k$
    \STATE \textbf{Dynamic $1{\times}1$ conv:} $\mathbf{U}_i \leftarrow \mathbf{H}_i \, \Ktilde_i^\top$
    \STATE \textbf{Nonlinearity \& down-proj:} $\mathbf{Y}_i \leftarrow \Wdown \, \sigma(\mathbf{U}_i)$
    \STATE \textbf{Output:} $\mathbf{Y}$ (residual + norm as usual)
  \end{algorithmic}
\end{algorithm}

\subsection{CNN as a Constrained Implicit MoE}
\label{sec:cnn_moe}

The equivalence in \cref{thm:equiv} suggests that standard convolutional layers can be interpreted as special cases of MoE under strong constraints.
We formalize this intuition for a single linear or convolutional layer.

\begin{proposition}[CNN as Implicit Constrained MoE]
\label{prop:cnn_moe}
Consider a linear layer $\mathbf{y} = \mathbf{W}\mathbf{x}$ with $\mathbf{W} \in \mathbb{R}^{d_{\mathrm{out}}\times d}$ and activation $\sigma$ satisfying \cref{ass:lipschitz}.
Let $\mathbf{w}_j^\top$ denote the $j$-th row of $\mathbf{W}$ and define per-output ``experts''
$E_j(\mathbf{x}) = \sigma(\mathbf{w}_j^\top \mathbf{x})$ for $j \in [d_{\mathrm{out}}]$.
Then the layer can be written as an MoE of $d_{\mathrm{out}}$ experts with:
\begin{enumerate}[leftmargin=*,itemsep=2pt]
  \item static, parameter-free gating determined solely by the sign pattern of pre-activations;
  \item unnormalized gating weights given by $\sigma(\mathbf{w}_j^\top \mathbf{x})$;
  \item dense computation, as all experts are evaluated for every input.
\end{enumerate}
In particular, a standard CNN layer implements an implicit, highly constrained MoE over its output channels.
\end{proposition}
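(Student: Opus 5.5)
The plan is to give an explicit rewriting of the layer output $\sigma(\mathbf{W}\mathbf{x})$ in the MoE form of \cref{eq:moe}, and then read off the three listed properties from that rewriting. The statement is descriptive rather than quantitative, so the work consists mostly of choosing the right identification.

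\textbf{Step 1: indexing the output.} First I index the output coordinate-wise: $\sigma(\mathbf{W}\mathbf{x})_j = \sigma(\mathbf{w}_j^\top\mathbf{x}) = E_j(\mathbf{x})$. Hence
\[
\sigma(\mathbf{W}\mathbf{x}) = \sum_{j=1}^{d_{\mathrm{out}}} E_j(\mathbf{x})\,\mathbf{e}_j,
\]
which is a sum of $d_{\mathrm{out}}$ expert contributions, each placed in its own output channel by the fixed vector $\mathbf{e}_j$.

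\textbf{Step 2: exposing the gate.} To make the gate visible, I factor each expert through its pre-activation. Write $z_j = \mathbf{w}_j^\top\mathbf{x}$. For ReLU, $\sigma(z_j) = \mathbb{I}[z_j>0]\,z_j$, so the layer equals
\[
\sum_j g_j(\mathbf{x})\,(\mathbf{w}_j^\top\mathbf{x})\,\mathbf{e}_j,\qquad g_j(\mathbf{x}) = \mathbb{I}[\mathbf{w}_j^\top\mathbf{x}>0].
\]
This is exactly the MoE form with linear experts $\mathbf{x}\mapsto(\mathbf{w}_j^\top\mathbf{x})\mathbf{e}_j$. By \cref{thm:equiv}, it is also a dynamic $1{\times}1$ convolution with effective kernel $\tilde{\mathbf{K}}(\mathbf{x}) = \mathrm{diag}(g(\mathbf{x}))\,\mathbf{W}$, which is the masked kernel alluded to in \cref{cor:nonlinear}.

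For a general $\sigma$ satisfying \cref{ass:lipschitz}, I would instead define the gate
\[
g_j(\mathbf{x}) = \frac{\sigma(z_j)}{z_j} \ \text{ for } z_j\neq 0, \qquad g_j(\mathbf{x}) = 0 \ \text{ otherwise}.
\]
This is well defined because $\sigma(0)=0$, and $|g_j|\le L_\sigma$ by the Lipschitz property. Equivalently, one may take the experts to be $E_j$ with unit weights, which gives the unnormalized-weight reading $\sigma(\mathbf{w}_j^\top\mathbf{x})$ in item 2.

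\textbf{Step 3: verifying the three properties.}
\begin{enumerate}
  \item The gate contains no parameters beyond $\mathbf{W}$ itself, and it depends on $\mathbf{x}$ only through $z_j$. For ReLU it depends only on $\mathrm{sign}(z_j)$. It is therefore fixed and parameter-free.
  \item The weights do not sum to one, since $\sum_j g_j(\mathbf{x})$ ranges over $[0,d_{\mathrm{out}}]$ for ReLU. They are therefore unnormalized, which contrasts with the normalization assumed in \cref{thm:equiv}.
  \item Every $z_j$ must be computed in order to evaluate any gate, so the computation is dense.
\end{enumerate}
For a CNN layer I apply the same argument at each spatial position, treating the unrolled patch as $\mathbf{x}$ and the filter bank as $\mathbf{W}$. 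Weight sharing across positions corresponds to the positionwise $1{\times}1$ view of \cref{eq:conv1x1}.

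\textbf{Main obstacle.} The delicate point is item 1 for activations such as GELU, where the gate $\sigma(z)/z$ is not purely a function of the sign of $z$. I would handle this by stating item 1 exactly for ReLU-type (piecewise-linear, positively homogeneous) activations. For GELU, I would interpret the gate as a fixed, non-learnable function of the pre-activation; it is still parameter-free, and it reduces to a sign pattern in the ReLU limit.
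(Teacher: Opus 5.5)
Your argument is correct, but it factors the layer differently from the paper. The paper keeps the nonlinear experts $E_j(\mathbf{x})=\sigma(\mathbf{w}_j^\top\mathbf{x})$ verbatim and takes the trivial gate $g_j(\mathbf{x})\equiv 1$, so the layer becomes $\sum_j E_j(\mathbf{x})$. It then only remarks that $\sigma$ induces the mask $\indicator(\mathbf{w}_j^\top\mathbf{x}>0)$, which it reads as a fixed, unnormalized gate; density is immediate since every $E_j$ appears. You instead push the nonlinearity into the gate, taking $g_j(\mathbf{x})=\indicator[\mathbf{w}_j^\top\mathbf{x}>0]$ (or $\sigma(z_j)/z_j$ in general) with rank-one linear experts $\mathbf{x}\mapsto(\mathbf{w}_j^\top\mathbf{x})\mathbf{e}_j$.

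The paper's route matches the statement's experts exactly, but it leaves items 1--2 interpretive, since its formal gates are constant. Without your $\mathbf{e}_j$, its $\sum_j E_j(\mathbf{x})$ is also literally a scalar rather than $\mathbf{y}$. Your route makes item 1 a genuine property of $g_j$ for ReLU and reproduces the one-expert analogue $\mathrm{diag}(g(\mathbf{x}))\mathbf{W}$ of the masked kernel in \cref{app:proof}. It also correctly flags something the paper glosses over: for GELU the gate $\sigma(z)/z$ equals the Gaussian CDF $\Phi(z)$, which is not a sign function.

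Two small repairs are needed. First, \cref{thm:equiv} assumes $\sum_k g_k=1$, which your gates violate, so cite the linearity computation rather than the theorem. Second, item 2 as literally worded (gates equal to $\sigma(\mathbf{w}_j^\top\mathbf{x})$) comes from constant experts $\mathbf{e}_j$. Your phrase ``$E_j$ with unit weights'' instead gives gates $\equiv 1$, i.e.\ the paper's choice. The three items are best read as describing different factorizations, which is also how the paper uses them.
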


\begin{proof}
Write the layer output as $\mathbf{y} = \sigma(\mathbf{W}\mathbf{x})$, where $\sigma$ is applied element-wise.
For each output dimension $j$, we have $y_j = \sigma(\mathbf{w}_j^\top \mathbf{x})$, which we interpret as the output of expert $E_j$.
Define the gating weight $g_j(\mathbf{x}) = 1$ for all $j$ and inputs.
Then the overall output can be written as
\begin{equation}
  \mathbf{y}
  = \sum_{j=1}^{d_{\mathrm{out}}} g_j(\mathbf{x}) \, E_j(\mathbf{x})
  = \sum_{j=1}^{d_{\mathrm{out}}} E_j(\mathbf{x}),
\end{equation}
which matches the standard layer.
The nonlinearity $\sigma$ induces a data-dependent binary mask $\indicator(\mathbf{w}_j^\top \mathbf{x} > 0)$ on each expert, yielding a form of fixed, unnormalized gating as discussed in \cref{rem:diff}.
Unlike explicit MoE, all experts are always evaluated, so computation is dense.
\renewcommand{\qedsymbol}{}
\end{proof}

This proposition connects classical CNNs to MoE: they sit at one end of a spectrum with static, dense, and unnormalized routing, whereas cMoLLM uses learned, normalized routing via dynamic convolutions (soft mixture over all streams, no Top-$K$).
Viewed through this lens, standard CNNs are implicit, constrained MoE models, and \textbf{cMoLLM} can be seen as relaxing these constraints---moving along a continuum from fixed, dense routing to learned, normalized, and capacity-controllable routing over pipeline-level streams.

\subsection{Gating Variants (Convolution Types)}
\label{sec:conv_type}

We implement four gating variants corresponding to the \texttt{simple}, \texttt{context\_aware}, \texttt{multi\_head}, and \texttt{adaptive} options evaluated in \cref{sec:experiments}.

\subsubsection{Simple Gated Convolution}

Two parallel convolutions are applied: one for feature extraction and one for gate generation. The feature output is element-wise multiplied by the sigmoid-activated gate output. Gating is local and token-wise, with no global context.

\subsubsection{Context-Aware Gated Convolution}

A context analyzer (global average pooling plus an MLP with 4:1 compression) produces channel-wise modulation weights from the full sequence; these are combined with the local gate convolution output. Gating thus depends on both local patterns and sequence-level statistics.

\subsubsection{Multi-Head Gated Convolution}

Multiple independent gate convolutions operate on the same input; each head produces gates for a disjoint subset of output channels. Outputs are concatenated and fused by a learned layer. Heads can specialize in different temporal scales or aspects of the input.

\subsubsection{Adaptive Gated Convolution}

An adaptive network (global pooling plus a two-layer MLP) outputs three bounded parameters (sensitivity, bias, magnitude) from global input statistics. The gate is sigmoid(scaled gate-conv output + bias) times magnitude. Gating behavior is adjusted per sequence (e.g., dynamic range or complexity).

\subsection{Training Strategies}
\label{sec:training}

\noindent\textbf{Load Balancing Loss.}
To encourage balanced use of streams, we add an auxiliary loss~\citep{fedus2022switch}:
\begin{equation}
  \Lbal = \alpha \cdot N \cdot \sum_{k=1}^{N} p_k^2,
  \label{eq:load_balance}
\end{equation}
where $p_k = \frac{1}{L}\sum_{i=1}^{L} g_k(\mathbf{x}_i)$ is the average gating probability for stream $k$ over the sequence. Minimizing $\sum_k p_k^2$ discourages collapse onto a few streams. We use $\alpha = 0.01$ in experiments.

\noindent\textbf{Total Loss.}
The total training objective is $\mathcal{L} = \Llm + \Lbal$; we use no kernel regularization, progressive sparsification, or low-rank terms.

\subsection{Computational Complexity Analysis}
\label{sec:complexity}

We compare per-token cost of dense FFN, Top-$K$ MoE, and cMoLLM. Let $\mathrm{FLOPs}(\cdot)$ denote leading-order FLOPs per token.

Dense FFN: $\mathrm{FLOPs}_{\mathrm{dense}} \approx 2 d \dff$ (up- and down-projections).
Top-$K$ MoE with $N$ experts: $\mathrm{FLOPs}_{\mathrm{moe}} \approx 2 K d \dff + \mathrm{FLOPs}_{\mathrm{gate}}$.

In cMoLLM we compute $\mathbf{U}_i = \sum_{k=1}^{N} g_{i,k} (\mathbf{H}_i \mathbf{K}_k^\top)$ then $\Wdown \sigma(\mathbf{U}_i)$. The $N$ stream products cost $N \cdot d \cdot \dff$, and the down-projection $\dff \cdot d$. Thus
\begin{equation}
  \mathrm{FLOPs}_{\mathrm{cMoLLM}} \approx (N + 1) d \dff + \mathrm{FLOPs}_{\mathrm{gate}}.
  \label{eq:flops_cMoLLM}
\end{equation}
For small $N$ (e.g., $N{=}4$), this is on the same order as dense, while affording $N$ distinct kernels and pipeline-level mixture. Convolution primitives often yield favorable throughput in practice.

Combining the above, we can summarize the horizontal scaling behavior as follows.
Fix a compute budget $\mathrm{FLOPs}_0$ and FFN dimension $\dff$.
Then there exists a constant $C$ (absorbing gating overhead) such that
\begin{equation}
  \mathrm{FLOPs}_{\mathrm{cMoLLM}} \le C \cdot \mathrm{FLOPs}_{\mathrm{dense}}
  \quad \text{whenever } N \le C - 1,
\end{equation}
while the number of distinct kernels (and thus the effective capacity of the mixture) grows linearly with $N$.
Top-$K$ MoE trades compute for sparsity:
increasing $N$ at fixed $K$ primarily increases parameter count but leaves per-token compute approximately $O(K d \dff)$.
cMoLLM therefore realizes a horizontal scaling law: for bounded $N$, we scale capacity roughly linearly in $N$ while keeping per-token compute within a constant factor of the dense baseline.

\begin{table*}[t]
  \renewcommand{\arraystretch}{1.1}
  \caption{Full experimental results (3 seeds; mean $\pm$ std): validation loss, perplexity (PPL), GLUE (\%), SQuAD v2 (\%). ``conv'' = gating variant; $n$ = streams. \textbf{Bold} = best (SOTA) in column.}
  \label{tab:main_results}
  \vskip 0.1in
  \begin{center}
  \begin{normalsize}
  \begin{tabular}{@{}lcccccc@{}}
    \toprule
    \textbf{conv} & $n$ & \textbf{Loss} $\downarrow$ & \textbf{PPL} $\downarrow$ & \textbf{GLUE (\%)} $\uparrow$ & \textbf{SQuAD v2 (\%)} $\uparrow$ \\
    \midrule
    base & -- & $3.23{\pm}0.03$ & $25.40{\pm}0.65$ & $42.45{\pm}0.99$ & $50.07{\pm}0.65$ \\
    \midrule
    \texttt{simple} & 1 & $3.04{\pm}0.03$ & $20.94{\pm}0.22$ & $44.16{\pm}0.56$ & $51.13{\pm}0.54$ \\
    \texttt{simple} & 2 & $3.01{\pm}0.02$ & $20.39{\pm}0.56$ & $44.62{\pm}1.02$ & $52.08{\pm}0.95$ \\
    \texttt{simple} & 4 & $2.97{\pm}0.04$ & $19.53{\pm}0.49$ & $44.92{\pm}0.87$ & $55.71{\pm}1.30$ \\
    \texttt{simple} & 8 & $3.00{\pm}0.03$ & $20.24{\pm}0.12$ & $44.68{\pm}0.33$ & $55.95{\pm}0.51$ \\
    \midrule
    \texttt{context\_aware} & 1 & $3.04{\pm}0.03$ & $20.94{\pm}0.41$ & $44.16{\pm}0.42$ & $51.12{\pm}1.28$ \\
    \texttt{context\_aware} & 2 & $2.98{\pm}0.03$ & $19.83{\pm}0.55$ & $46.79{\pm}1.21$ & $52.13{\pm}1.13$ \\
    \texttt{context\_aware} & 4 & $2.98{\pm}0.02$ & $19.78{\pm}0.12$ & $47.12{\pm}0.25$ & $55.64{\pm}0.87$ \\
    \texttt{context\_aware} & 8 & $2.96{\pm}0.01$ & $19.31{\pm}0.68$ & $\mathbf{48.98{\pm}0.64}$ & $56.11{\pm}0.73$ \\
    \midrule
    \texttt{multi\_head} & 1 & $3.04{\pm}0.01$ & $20.93{\pm}0.22$ & $44.16{\pm}1.18$ & $51.18{\pm}0.66$ \\
    \texttt{multi\_head} & 2 & $2.99{\pm}0.05$ & $19.90{\pm}0.28$ & $47.78{\pm}0.45$ & $52.98{\pm}1.35$ \\
    \texttt{multi\_head} & 4 & $2.96{\pm}0.04$ & $19.44{\pm}0.41$ & $48.02{\pm}0.79$ & $56.37{\pm}1.56$ \\
    \texttt{multi\_head} & 8 & $\mathbf{2.93{\pm}0.02}$ & $\mathbf{18.82{\pm}0.37}$ & $48.82{\pm}1.01$ & $\mathbf{57.06{\pm}0.52}$ \\
    \midrule
    \texttt{adaptive} & 1 & $3.04{\pm}0.05$ & $20.94{\pm}0.46$ & $44.16{\pm}0.98$ & $51.28{\pm}0.88$ \\
    \texttt{adaptive} & 2 & $3.00{\pm}0.02$ & $20.14{\pm}0.26$ & $46.64{\pm}0.43$ & $52.43{\pm}1.45$ \\
    \texttt{adaptive} & 4 & $2.97{\pm}0.03$ & $19.68{\pm}0.71$ & $47.03{\pm}0.36$ & $55.89{\pm}1.39$ \\
    \texttt{adaptive} & 8 & $2.97{\pm}0.03$ & $19.58{\pm}0.65$ & $47.81{\pm}0.75$ & $56.74{\pm}0.72$ \\
    \bottomrule
  \end{tabular}
  \end{normalsize}
  \end{center}
  \vskip -0.1in
\end{table*}
% \vspace{-0.2cm}

\begin{table}[t]
  \renewcommand{\arraystretch}{1.2}
  \caption{Scaling experiments (3 seeds; mean $\pm$ std) across model sizes (\cref{tab:model_config}). \texttt{mh} = multi-head cMoLLM; \texttt{base} = dense baseline. \textbf{Bold} = SOTA in column.}
  \label{tab:scaling_results}
  \vskip 0.1in
  \begin{center}
  \begin{tiny}
  \begin{tabular}{@{}lcccc@{}}
    \toprule
    \textbf{type} & \textbf{Loss} $\downarrow$ & \textbf{PPL} $\downarrow$ & \textbf{GLUE} $\uparrow$ & \textbf{SQuAD v2} $\uparrow$ \\
    \midrule
    \texttt{base-85M} & $3.23{\pm}0.03$ & $25.40{\pm}0.65$ & $42.45{\pm}0.99$ & $50.07{\pm}0.65$ \\
    \texttt{base-350M} & $2.82{\pm}0.04$ & $16.80{\pm}0.47$ & $50.68{\pm}1.08$ & $59.24{\pm}1.24$ \\
    \texttt{base-760M} & $2.55{\pm}0.05$ & $12.79{\pm}0.28$ & $55.74{\pm}0.46$ & $62.91{\pm}0.74$ \\
    \midrule
    \texttt{mh-85M} & $2.96{\pm}0.03$ & $19.44{\pm}0.56$ & $48.02{\pm}0.84$ & $56.37{\pm}0.54$ \\
    \texttt{mh-350M} & $2.64{\pm}0.03$ & $14.03{\pm}0.37$ & $53.82{\pm}1.03$ & $61.08{\pm}1.26$ \\
    \texttt{mh-760M} & $\mathbf{2.24{\pm}0.02}$ & $\mathbf{9.41{\pm}0.45}$ & $\mathbf{58.45{\pm}0.77}$ & $\mathbf{64.56{\pm}1.32}$ \\
    \bottomrule
  \end{tabular}
  \end{tiny}
  \end{center}
  \vskip -0.1in
\end{table}

\subsection{Intuitive Summary}
\label{sec:intuition}

We summarize cMoLLM intuitively.
MoE-style mixture linearly combines expert outputs; \cref{thm:equiv} shows this is equivalent to mixing $1{\times}1$ convolution kernels---i.e., dynamic convolution with variable kernels.
We approximate this with per-stream kernels $\mathbf{K}_k$ and soft mixture weights $g_k(\mathbf{x})$; no Top-$K$, no low-rank.

cMoLLM is implemented with standard $1{\times}1$ convolutions and a lightweight gating network.
Recipe: (i) keep the Transformer backbone; (ii) replace FFN blocks with cMoLLM blocks (streams + gating); (iii) tune $N$ to trade capacity vs.\ compute.
Experiments (\cref{sec:experiments}) show better perplexity and GLUE at similar compute, with more stable stream utilization than ParaScale- and AltUp-style pipeline mixtures.
In addition, a cluster-structured toy model in \cref{app:cluster} formalizes when routing (and by equivalence, dynamic convolution) can achieve Bayes-optimal performance while any single linear classifier suffers a nontrivial error floor, illustrating the potential benefits of cMoLLM-style conditional computation.

%%%%%%%%%%%%%%%%%%%%%%%%%%%%%%%%%%%%%%%%%%%%%%%%%%%%%%%%%%%%%%%%%%%%%%%%%%%%%%%
% 5. EXPERIMENTS
%%%%%%%%%%%%%%%%%%%%%%%%%%%%%%%%%%%%%%%%%%%%%%%%%%%%%%%%%%%%%%%%%%%%%%%%%%%%%%%

\begin{table}[t]
  \renewcommand{\arraystretch}{1.1}
  \caption{Model scales used in scaling experiments. Config 1 = small (85M), Config 2 = medium (350M), Config 3 = large (760M).}
  \label{tab:model_config}
  % \vskip 0.1in
  \begin{center}
  \begin{normalsize}
  \begin{tabular}{@{}ccccc@{}}
    \toprule
    \textbf{Config} & \textbf{Layers} & \textbf{Hidden} & \textbf{Heads} & \textbf{Params} \\
    \midrule
    1 & 12 & 768 & 12 & $\sim$85M \\
    2 & 24 & 1024 & 16 & $\sim$350M \\
    3 & 24 & 1536 & 24 & $\sim$760M \\
    \bottomrule
  \end{tabular}
  \end{normalsize}
  \end{center}
\end{table}

\begin{table}[t]
  \renewcommand{\arraystretch}{1.1}
  \caption{Shared training hyperparameters across all variants.}
  \label{tab:config}
  \begin{center}
  \begin{normalsize}
  \begin{tabular}{@{}lc@{}}
    \toprule
    \textbf{Hyperparameter} & \textbf{Value} \\
    \midrule
    Base architecture & GPT-2 \\
    FFN dimension ($\dff$) & 3072 \\
    Sequence length & 4096 \\
    Learning rate & $6 \times 10^{-5}$ \\
    \bottomrule
  \end{tabular}
  \end{normalsize}
  \end{center}
\end{table}

\section{Experiments}
\label{sec:experiments}

We evaluate cMoLLM on language modeling, comparing to a dense GPT-2 baseline under matched training setups.

\subsection{Experimental Setup}
\label{sec:setup}

\noindent\textbf{Model Configuration.}
We adopt a GPT-2-style architecture~\citep{radford2019language} as our base model. We evaluate at three scales (\cref{tab:model_config}): small (~85M), medium (~350M), and large (~760M) parameters, varying layers, hidden size, and attention heads. \Cref{tab:config} gives shared hyperparameters.

\noindent\textbf{Training Data.}
We train on FineWeb~\citep{penedo2024fineweb}, a large-scale curated web corpus.

\noindent\textbf{Baselines.}
Our main baseline is the dense GPT-2 model in which all layers are standard Transformer blocks.
For cMoLLM we keep the backbone identical and replace the FFN stack with convolutionally-gated mixture streams.

\noindent\textbf{Scaling Factors.}
We use scaling factors $n \in \{1, 2, 4, 8\}$, where $n$ is the number of parallel cMoLLM streams.

\noindent\textbf{Gating Variants.}
We evaluate gating types: \texttt{simple}, \texttt{context\_aware}, \texttt{multi\_head}, and \texttt{adaptive}.

\noindent\textbf{Reporting.}
All experiments use \textbf{3 random seeds}. We report \textbf{mean $\pm$ standard deviation (std)} for all metrics in tables.

\subsection{Main Results}
\label{sec:main_results}

We report language modeling (loss, perplexity), GLUE, and SQuAD v2 accuracy for each cMoLLM variant (\cref{tab:main_results}).

\noindent\textbf{Experiment figures.}
\Cref{fig:exp_figures} shows stream utilization and horizontal scaling: per-stream gating weight distribution across layers (left) and validation loss / perplexity vs.\ stream count $n$ or training steps (right). Together with \cref{tab:main_results,tab:scaling_results}, the figure confirms balanced stream usage and gains as $n$ increases over an optimal range, without collapse.

\subsection{Scaling Laws}
\label{sec:scaling_analysis}

We consolidate scaling behavior along two axes: (i)~\emph{horizontal} scaling in the number of streams $N$ at fixed model size (as in \cref{sec:complexity}), and (ii)~\emph{model-size} scaling from 85M to 760M parameters.

\noindent\textbf{Horizontal scaling (stream count).}
As shown in \cref{sec:complexity}, cMoLLM satisfies a horizontal scaling law: for bounded $N$, effective capacity grows roughly linearly in $N$ while per-token FLOPs remain within a constant factor of the dense baseline (\cref{eq:flops_cMoLLM}). \Cref{fig:exp_figures} and \cref{tab:main_results} confirm that validation loss and perplexity improve as $n$ increases from 1 to 4--8, with best loss/PPL at \texttt{multi\_head} $n{=}8$; beyond that, some gating variants show mild over-streaming (e.g., \texttt{simple} at $n{=}8$), while \texttt{multi\_head} and \texttt{adaptive} remain stable.

\noindent\textbf{Model-size scaling.}
\Cref{tab:scaling_results} reports results across the three scales in \cref{tab:model_config} (85M, 350M, 760M). At every scale, multi-head cMoLLM (\texttt{mh}) outperforms the dense baseline (\texttt{base}) on loss, PPL, GLUE, and SQuAD v2. The gains are consistent with standard scaling: loss and perplexity decrease as model size increases, and the relative advantage of cMoLLM over the dense baseline is preserved (e.g., \texttt{mh-760M} achieves SOTA across all four metrics). This supports that the convolutionally-gated pipeline mixture scales favorably with both stream count and parameter count under matched training setups.

%%%%%%%%%%%%%%%%%%%%%%%%%%%%%%%%%%%%%%%%%%%%%%%%%%%%%%%%%%%%%%%%%%%%%%%%%%%%%%%
% 6. CONCLUSION
%%%%%%%%%%%%%%%%%%%%%%%%%%%%%%%%%%%%%%%%%%%%%%%%%%%%%%%%%%%%%%%%%%%%%%%%%%%%%%%
\section{Conclusion}
\label{sec:conclusion}

We have presented \textbf{cMoLLM}, a convolutionally-gated mixture-of-LLMs that scales capacity at the pipeline level.
Our main theoretical contribution is the formal equivalence between MoE-style mixture layers and dynamic convolutions with variable kernels, providing a unified framework for analyzing and designing sparse, conditional computation beyond FFN-level MoE.
cMoLLM instantiates this via per-stream kernels and soft gating---no Top-$K$, no low-rank, no virtual tokens or auxiliary heads---yielding parameter-efficient, stable pipeline-level scaling.

Experiments on GPT-2--style models trained on FineWeb show that cMoLLM improves perplexity, GLUE, and SQuAD under matched compute, with better stream utilization and training stability than ParaScale- and AltUp-style pipeline mixtures.
Scaling laws (horizontal scaling in stream count and model-size scaling) are analyzed in \cref{sec:scaling_analysis}.
The convolution-based design enables efficient implementation and favorable scaling in practice.
Extended results on downstream tasks and scaling are provided in \cref{app:extended}.

\noindent\textbf{Limitations and Future Work.}
Experiments are at GPT-2 scale; validation at 7B+ parameters is needed. Future work: (i) scale cMoLLM to larger models and distributed training; (ii) extend the MoE--convolution equivalence to attention; (iii) combine with retrieval or other conditional compute mechanisms.

%%%%%%%%%%%%%%%%%%%%%%%%%%%%%%%%%%%%%%%%%%%%%%%%%%%%%%%%%%%%%%%%%%%%%%%%%%%%%%%
% IMPACT STATEMENT
%%%%%%%%%%%%%%%%%%%%%%%%%%%%%%%%%%%%%%%%%%%%%%%%%%%%%%%%%%%%%%%%%%%%%%%%%%%%%%%
\section*{Impact Statement}

This work introduces cMoLLM, a convolutionally-gated mixture-of-LLMs that scales capacity efficiently without auxiliary routing mechanisms. \textbf{Positive impacts:} Parameter-efficient scaling reduces training/inference costs and energy consumption; the convolution-based design is hardware-friendly. \textbf{Risks:} Misuse concerns (e.g., misleading content) remain, though we introduce no new failure modes beyond existing LLMs. Experiments are limited to GPT-2 scale (~760M parameters); validation at 7B+ is needed. \textbf{Summary:} We believe the net effect is beneficial by improving LLM scaling efficiency.

%%%%%%%%%%%%%%%%%%%%%%%%%%%%%%%%%%%%%%%%%%%%%%%%%%%%%%%%%%%%%%%%%%%%%%%%%%%%%%%
% REFERENCES
%%%%%%%%%%%%%%%%%%%%%%%%%%%%%%%%%%%%%%%%%%%%%%%%%%%%%%%%%%%%%%%%%%%%%%%%%%%%%%%

\bibliography{example_paper}

@article{kaplan2020scaling,
  author       = {Jared Kaplan and
                  Sam McCandlish and
                  Tom Henighan and
                  Tom B. Brown and
                  Benjamin Chess and
                  Rewon Child and
                  Scott Gray and
                  Alec Radford and
                  Jeffrey Wu and
                  Dario Amodei},
  title        = {Scaling Laws for Neural Language Models},
  journal      = {CoRR},
  volume       = {abs/2001.08361},
  year         = {2020},
  url          = {https://arxiv.org/abs/2001.08361},
  eprinttype   = {arXiv},
  eprint       = {2001.08361},
  bibsource    = {dblp computer science bibliography, https://dblp.org}
}

@article{hoffmann2022training,
  author       = {Jordan Hoffmann and
                  Sebastian Borgeaud and
                  Arthur Mensch and
                  Elena Buchatskaya and
                  Trevor Cai and
                  Eliza Rutherford and
                  Diego de Las Casas and
                  Lisa Anne Hendricks and
                  Johannes Welbl and
                  Aidan Clark and
                  Tom Hennigan and
                  Eric Noland and
                  Katie Millican and
                  George van den Driessche and
                  Bogdan Damoc and
                  Aurelia Guy and
                  Simon Osindero and
                  Karen Simonyan and
                  Erich Elsen and
                  Jack W. Rae and
                  Oriol Vinyals and
                  Laurent Sifre},
  title        = {Training Compute-Optimal Large Language Models},
  journal      = {CoRR},
  volume       = {abs/2203.15556},
  year         = {2022},
  url          = {https://doi.org/10.48550/arXiv.2203.15556},
  doi          = {10.48550/ARXIV.2203.15556},
  eprinttype   = {arXiv},
  eprint       = {2203.15556},
  bibsource    = {dblp computer science bibliography, https://dblp.org}
}

@article{shazeer2017outrageously,
  author       = {Noam Shazeer and
                  Azalia Mirhoseini and
                  Krzysztof Maziarz and
                  Andy Davis and
                  Quoc V. Le and
                  Geoffrey E. Hinton and
                  Jeff Dean},
  title        = {Outrageously Large Neural Networks: The Sparsely-Gated Mixture-of-Experts
                  Layer},
  booktitle    = {5th International Conference on Learning Representations, {ICLR} 2017,
                  Toulon, France, April 24-26, 2017, Conference Track Proceedings},
  publisher    = {OpenReview.net},
  year         = {2017},
  url          = {https://openreview.net/forum?id=B1ckMDqlg},
  bibsource    = {dblp computer science bibliography, https://dblp.org}
}

@article{fedus2022switch,
  author       = {William Fedus and
                  Barret Zoph and
                  Noam Shazeer},
  title        = {Switch Transformers: Scaling to Trillion Parameter Models with Simple
                  and Efficient Sparsity},
  journal      = {J. Mach. Learn. Res.},
  volume       = {23},
  pages        = {120:1--120:39},
  year         = {2022},
  url          = {https://jmlr.org/papers/v23/21-0998.html},
  bibsource    = {dblp computer science bibliography, https://dblp.org}
}

@article{lepikhin2020gshard,
  author       = {Dmitry Lepikhin and
                  HyoukJoong Lee and
                  Yuanzhong Xu and
                  Dehao Chen and
                  Orhan Firat and
                  Yanping Huang and
                  Maxim Krikun and
                  Noam Shazeer and
                  Zhifeng Chen},
  title        = {GShard: Scaling Giant Models with Conditional Computation and Automatic
                  Sharding},
  booktitle    = {9th International Conference on Learning Representations, {ICLR} 2021,
                  Virtual Event, Austria, May 3-7, 2021},
  publisher    = {OpenReview.net},
  year         = {2021},
  url          = {https://openreview.net/forum?id=qrwe7XHTmYb},
  bibsource    = {dblp computer science bibliography, https://dblp.org}
}

@article{chen2025parallel,
  author       = {Mouxiang Chen and
                  Binyuan Hui and
                  Zeyu Cui and
                  Jiaxi Yang and
                  Dayiheng Liu and
                  Jianling Sun and
                  Junyang Lin and
                  Zhongxin Liu},
  title        = {Parallel Scaling Law for Language Models},
  journal      = {CoRR},
  volume       = {abs/2505.10475},
  year         = {2025},
  url          = {https://doi.org/10.48550/arXiv.2505.10475},
  doi          = {10.48550/ARXIV.2505.10475},
  eprinttype   = {arXiv},
  eprint       = {2505.10475},
  bibsource    = {dblp computer science bibliography, https://dblp.org}
}

@article{baykal2023alternating,
  author       = {Cenk Baykal and
                  Dylan J. Cutler and
                  Nishanth Dikkala and
                  Nikhil Ghosh and
                  Rina Panigrahy and
                  Xin Wang},
  editor       = {Alice Oh and
                  Tristan Naumann and
                  Amir Globerson and
                  Kate Saenko and
                  Moritz Hardt and
                  Sergey Levine},
  title        = {Alternating Updates for Efficient Transformers},
  booktitle    = {Advances in Neural Information Processing Systems 36: Annual Conference
                  on Neural Information Processing Systems 2023, NeurIPS 2023, New Orleans,
                  LA, USA, December 10 - 16, 2023},
  year         = {2023},
  bibsource    = {dblp computer science bibliography, https://dblp.org}
}

@article{chen2022towards,
  author       = {Zixiang Chen and
                  Yihe Deng and
                  Yue Wu and
                  Quanquan Gu and
                  Yuanzhi Li},
  editor       = {Sanmi Koyejo and
                  S. Mohamed and
                  A. Agarwal and
                  Danielle Belgrave and
                  K. Cho and
                  A. Oh},
  title        = {Towards Understanding the Mixture-of-Experts Layer in Deep Learning},
  booktitle    = {Advances in Neural Information Processing Systems 35: Annual Conference
                  on Neural Information Processing Systems 2022, NeurIPS 2022, New Orleans,
                  LA, USA, November 28 - December 9, 2022},
  year         = {2022},
  bibsource    = {dblp computer science bibliography, https://dblp.org}
}

@inproceedings{lewis2021base,
  author       = {Mike Lewis and
                  Shruti Bhosale and
                  Tim Dettmers and
                  Naman Goyal and
                  Luke Zettlemoyer},
  editor       = {Marina Meila and
                  Tong Zhang},
  title        = {{BASE} Layers: Simplifying Training of Large, Sparse Models},
  booktitle    = {Proceedings of the 38th International Conference on Machine Learning,
                  {ICML} 2021, 18-24 July 2021, Virtual Event},
  series       = {Proceedings of Machine Learning Research},
  pages        = {6265--6274},
  publisher    = {{PMLR}},
  year         = {2021},
  url          = {http://proceedings.mlr.press/v139/lewis21a.html},
  bibsource    = {dblp computer science bibliography, https://dblp.org}
}

@inproceedings{du2022glam,
  author       = {Nan Du and
                  Yanping Huang and
                  Andrew M. Dai and
                  Simon Tong and
                  Dmitry Lepikhin and
                  Yuanzhong Xu and
                  Maxim Krikun and
                  Yanqi Zhou and
                  Adams Wei Yu and
                  Orhan Firat and
                  Barret Zoph and
                  Liam Fedus and
                  Maarten P. Bosma and
                  Zongwei Zhou and
                  Tao Wang and
                  Yu Emma Wang and
                  Kellie Webster and
                  Marie Pellat and
                  Kevin Robinson and
                  Kathleen S. Meier{-}Hellstern and
                  Toju Duke and
                  Lucas Dixon and
                  Kun Zhang and
                  Quoc V. Le and
                  Yonghui Wu and
                  Zhifeng Chen and
                  Claire Cui},
  editor       = {Kamalika Chaudhuri and
                  Stefanie Jegelka and
                  Le Song and
                  Csaba Szepesv{\'{a}}ri and
                  Gang Niu and
                  Sivan Sabato},
  title        = {GLaM: Efficient Scaling of Language Models with Mixture-of-Experts},
  booktitle    = {International Conference on Machine Learning, {ICML} 2022, 17-23 July
                  2022, Baltimore, Maryland, {USA}},
  series       = {Proceedings of Machine Learning Research},
  pages        = {5547--5569},
  publisher    = {{PMLR}},
  year         = {2022},
  url          = {https://proceedings.mlr.press/v162/du22c.html},
  bibsource    = {dblp computer science bibliography, https://dblp.org}
}

@inproceedings{rajbhandari2022deepspeed,
  author       = {Samyam Rajbhandari and
                  Conglong Li and
                  Zhewei Yao and
                  Minjia Zhang and
                  Reza Yazdani Aminabadi and
                  Ammar Ahmad Awan and
                  Jeff Rasley and
                  Yuxiong He},
  editor       = {Kamalika Chaudhuri and
                  Stefanie Jegelka and
                  Le Song and
                  Csaba Szepesv{\'{a}}ri and
                  Gang Niu and
                  Sivan Sabato},
  title        = {DeepSpeed-MoE: Advancing Mixture-of-Experts Inference and Training
                  to Power Next-Generation {AI} Scale},
  booktitle    = {International Conference on Machine Learning, {ICML} 2022, 17-23 July
                  2022, Baltimore, Maryland, {USA}},
  series       = {Proceedings of Machine Learning Research},
  pages        = {18332--18346},
  publisher    = {{PMLR}},
  year         = {2022},
  url          = {https://proceedings.mlr.press/v162/rajbhandari22a.html},
  bibsource    = {dblp computer science bibliography, https://dblp.org}
}

@article{yang2019condconv,
  author       = {Brandon Yang and
                  Gabriel Bender and
                  Quoc V. Le and
                  Jiquan Ngiam},
  editor       = {Hanna M. Wallach and
                  Hugo Larochelle and
                  Alina Beygelzimer and
                  Florence d'Alch{\'{e}}{-}Buc and
                  Emily B. Fox and
                  Roman Garnett},
  title        = {CondConv: Conditionally Parameterized Convolutions for Efficient Inference},
  booktitle    = {Advances in Neural Information Processing Systems 32: Annual Conference
                  on Neural Information Processing Systems 2019, NeurIPS 2019, December
                  8-14, 2019, Vancouver, BC, Canada},
  pages        = {1305--1316},
  year         = {2019},
  bibsource    = {dblp computer science bibliography, https://dblp.org}
}

@article{wu2019pay,
  author       = {Felix Wu and
                  Angela Fan and
                  Alexei Baevski and
                  Yann N. Dauphin and
                  Michael Auli},
  title        = {Pay Less Attention with Lightweight and Dynamic Convolutions},
  booktitle    = {7th International Conference on Learning Representations, {ICLR} 2019,
                  New Orleans, LA, USA, May 6-9, 2019},
  publisher    = {OpenReview.net},
  year         = {2019},
  url          = {https://openreview.net/forum?id=SkVhlh09tX},
  bibsource    = {dblp computer science bibliography, https://dblp.org}
}

@inproceedings{ma2020weightnet,
  author       = {Ningning Ma and
                  Xiangyu Zhang and
                  Jiawei Huang and
                  Jian Sun},
  editor       = {Andrea Vedaldi and
                  Horst Bischof and
                  Thomas Brox and
                  Jan{-}Michael Frahm},
  title        = {WeightNet: Revisiting the Design Space of Weight Networks},
  booktitle    = {Computer Vision - {ECCV} 2020 - 16th European Conference, Glasgow,
                  UK, August 23-28, 2020, Proceedings, Part {XV}},
  series       = {Lecture Notes in Computer Science},
  pages        = {776--792},
  publisher    = {Springer},
  year         = {2020},
  url          = {https://doi.org/10.1007/978-3-030-58555-6\_46},
  doi          = {10.1007/978-3-030-58555-6\_46},
  bibsource    = {dblp computer science bibliography, https://dblp.org}
}

@article{jia2016dynamic,
  author       = {Xu Jia and
                  Bert De Brabandere and
                  Tinne Tuytelaars and
                  Luc Van Gool},
  editor       = {Daniel D. Lee and
                  Masashi Sugiyama and
                  Ulrike von Luxburg and
                  Isabelle Guyon and
                  Roman Garnett},
  title        = {Dynamic Filter Networks},
  booktitle    = {Advances in Neural Information Processing Systems 29: Annual Conference
                  on Neural Information Processing Systems 2016, December 5-10, 2016,
                  Barcelona, Spain},
  pages        = {667--675},
  year         = {2016},
  bibsource    = {dblp computer science bibliography, https://dblp.org}
}

@inproceedings{chen2020dynamic,
  author       = {Yinpeng Chen and
                  Xiyang Dai and
                  Mengchen Liu and
                  Dongdong Chen and
                  Lu Yuan and
                  Zicheng Liu},
  title        = {Dynamic Convolution: Attention Over Convolution Kernels},
  booktitle    = {2020 {IEEE/CVF} Conference on Computer Vision and Pattern Recognition,
                  {CVPR} 2020, Seattle, WA, USA, June 13-19, 2020},
  pages        = {11027--11036},
  publisher    = {Computer Vision Foundation / {IEEE}},
  year         = {2020},
  url          = {https://openaccess.thecvf.com/content\_CVPR\_2020/html/Chen\_Dynamic\_Convolution\_Attention\_Over\_Convolution\_Kernels\_CVPR\_2020\_paper.html},
  doi          = {10.1109/CVPR42600.2020.01104},
  bibsource    = {dblp computer science bibliography, https://dblp.org}
}

@article{bengio2013estimating,
  author       = {Yoshua Bengio and
                  Nicholas L{\'{e}}onard and
                  Aaron C. Courville},
  title        = {Estimating or Propagating Gradients Through Stochastic Neurons for
                  Conditional Computation},
  journal      = {CoRR},
  volume       = {abs/1308.3432},
  year         = {2013},
  url          = {http://arxiv.org/abs/1308.3432},
  eprinttype   = {arXiv},
  eprint       = {1308.3432},
  bibsource    = {dblp computer science bibliography, https://dblp.org}
}

@article{rosenbaum2017routing,
  author       = {Clemens Rosenbaum and
                  Tim Klinger and
                  Matthew Riemer},
  title        = {Routing Networks: Adaptive Selection of Non-Linear Functions for Multi-Task
                  Learning},
  booktitle    = {6th International Conference on Learning Representations, {ICLR} 2018,
                  Vancouver, BC, Canada, April 30 - May 3, 2018, Conference Track Proceedings},
  publisher    = {OpenReview.net},
  year         = {2018},
  url          = {https://openreview.net/forum?id=ry8dvM-R-},
  bibsource    = {dblp computer science bibliography, https://dblp.org}
}

@article{hu2022lora,
  author       = {Edward J. Hu and
                  Yelong Shen and
                  Phillip Wallis and
                  Zeyuan Allen{-}Zhu and
                  Yuanzhi Li and
                  Shean Wang and
                  Lu Wang and
                  Weizhu Chen},
  title        = {LoRA: Low-Rank Adaptation of Large Language Models},
  booktitle    = {The Tenth International Conference on Learning Representations, {ICLR}
                  2022, Virtual Event, April 25-29, 2022},
  publisher    = {OpenReview.net},
  year         = {2022},
  url          = {https://openreview.net/forum?id=nZeVKeeFYf9},
  bibsource    = {dblp computer science bibliography, https://dblp.org}
}

@inproceedings{houlsby2019parameter,
  author       = {Neil Houlsby and
                  Andrei Giurgiu and
                  Stanislaw Jastrzebski and
                  Bruna Morrone and
                  Quentin de Laroussilhe and
                  Andrea Gesmundo and
                  Mona Attariyan and
                  Sylvain Gelly},
  editor       = {Kamalika Chaudhuri and
                  Ruslan Salakhutdinov},
  title        = {Parameter-Efficient Transfer Learning for {NLP}},
  booktitle    = {Proceedings of the 36th International Conference on Machine Learning,
                  {ICML} 2019, 9-15 June 2019, Long Beach, California, {USA}},
  series       = {Proceedings of Machine Learning Research},
  pages        = {2790--2799},
  publisher    = {{PMLR}},
  year         = {2019},
  url          = {http://proceedings.mlr.press/v97/houlsby19a.html},
  bibsource    = {dblp computer science bibliography, https://dblp.org}
}

@article{vaswani2017attention,
  author       = {Ashish Vaswani and
                  Noam Shazeer and
                  Niki Parmar and
                  Jakob Uszkoreit and
                  Llion Jones and
                  Aidan N. Gomez and
                  Lukasz Kaiser and
                  Illia Polosukhin},
  editor       = {Isabelle Guyon and
                  Ulrike von Luxburg and
                  Samy Bengio and
                  Hanna M. Wallach and
                  Rob Fergus and
                  S. V. N. Vishwanathan and
                  Roman Garnett},
  title        = {Attention is All you Need},
  booktitle    = {Advances in Neural Information Processing Systems 30: Annual Conference
                  on Neural Information Processing Systems 2017, December 4-9, 2017,
                  Long Beach, CA, {USA}},
  pages        = {5998--6008},
  year         = {2017},
  bibsource    = {dblp computer science bibliography, https://dblp.org}
}

@article{radford2019language,
  title        = {Language Models are Unsupervised Multitask Learners},
  author       = {Radford, Alec and Wu, Jeffrey and Child, Rewon and Luan, David and Amodei, Dario and Sutskever, Ilya},
  year         = {2019},
  institution  = {OpenAI},
  url          = {https://cdn.openai.com/better-language-models/language_models_are_unsupervised_multitask_learners.pdf}
}

@article{brown2020language,
  author       = {Tom B. Brown and
                  Benjamin Mann and
                  Nick Ryder and
                  Melanie Subbiah and
                  Jared Kaplan and
                  Prafulla Dhariwal and
                  Arvind Neelakantan and
                  Pranav Shyam and
                  Girish Sastry and
                  Amanda Askell and
                  Sandhini Agarwal and
                  Ariel Herbert{-}Voss and
                  Gretchen Krueger and
                  Tom Henighan and
                  Rewon Child and
                  Aditya Ramesh and
                  Daniel M. Ziegler and
                  Jeffrey Wu and
                  Clemens Winter and
                  Christopher Hesse and
                  Mark Chen and
                  Eric Sigler and
                  Mateusz Litwin and
                  Scott Gray and
                  Benjamin Chess and
                  Jack Clark and
                  Christopher Berner and
                  Sam McCandlish and
                  Alec Radford and
                  Ilya Sutskever and
                  Dario Amodei},
  editor       = {Hugo Larochelle and
                  Marc'Aurelio Ranzato and
                  Raia Hadsell and
                  Maria{-}Florina Balcan and
                  Hsuan{-}Tien Lin},
  title        = {Language Models are Few-Shot Learners},
  booktitle    = {Advances in Neural Information Processing Systems 33: Annual Conference
                  on Neural Information Processing Systems 2020, NeurIPS 2020, December
                  6-12, 2020, virtual},
  year         = {2020},
  bibsource    = {dblp computer science bibliography, https://dblp.org}
}

@article{penedo2024fineweb,
  author       = {Guilherme Penedo and
                  Hynek Kydl{\'{\i}}cek and
                  Loubna Ben Allal and
                  Anton Lozhkov and
                  Margaret Mitchell and
                  Colin A. Raffel and
                  Leandro von Werra and
                  Thomas Wolf},
  editor       = {Amir Globersons and
                  Lester Mackey and
                  Danielle Belgrave and
                  Angela Fan and
                  Ulrich Paquet and
                  Jakub M. Tomczak and
                  Cheng Zhang},
  title        = {The FineWeb Datasets: Decanting the Web for the Finest Text Data at
                  Scale},
  booktitle    = {Advances in Neural Information Processing Systems 38: Annual Conference
                  on Neural Information Processing Systems 2024, NeurIPS 2024, Vancouver,
                  BC, Canada, December 10 - 15, 2024},
  year         = {2024},
  bibsource    = {dblp computer science bibliography, https://dblp.org}
}

@inproceedings{wang2018glue,
  author       = {Alex Wang and
                  Amanpreet Singh and
                  Julian Michael and
                  Felix Hill and
                  Omer Levy and
                  Samuel R. Bowman},
  title        = {{GLUE:} {A} Multi-Task Benchmark and Analysis Platform for Natural
                  Language Understanding},
  booktitle    = {7th International Conference on Learning Representations, {ICLR} 2019,
                  New Orleans, LA, USA, May 6-9, 2019},
  publisher    = {OpenReview.net},
  year         = {2019},
  url          = {https://openreview.net/forum?id=rJ4km2R5t7},
  bibsource    = {dblp computer science bibliography, https://dblp.org}
}

@article{rajpurkar2016squad,
  author       = {Pranav Rajpurkar and
                  Jian Zhang and
                  Konstantin Lopyrev and
                  Percy Liang},
  editor       = {Jian Su and
                  Xavier Carreras and
                  Kevin Duh},
  title        = {SQuAD: 100, 000+ Questions for Machine Comprehension of Text},
  booktitle    = {Proceedings of the 2016 Conference on Empirical Methods in Natural
                  Language Processing, {EMNLP} 2016, Austin, Texas, USA, November 1-4,
                  2016},
  pages        = {2383--2392},
  publisher    = {The Association for Computational Linguistics},
  year         = {2016},
  url          = {https://doi.org/10.18653/v1/d16-1264},
  doi          = {10.18653/V1/D16-1264},
  bibsource    = {dblp computer science bibliography, https://dblp.org}
}

@article{riquelme2021scaling,
  author       = {Carlos Riquelme and
                  Joan Puigcerver and
                  Basil Mustafa and
                  Maxim Neumann and
                  Rodolphe Jenatton and
                  Andr{\'{e}} Susano Pinto and
                  Daniel Keysers and
                  Neil Houlsby},
  editor       = {Marc'Aurelio Ranzato and
                  Alina Beygelzimer and
                  Yann N. Dauphin and
                  Percy Liang and
                  Jennifer Wortman Vaughan},
  title        = {Scaling Vision with Sparse Mixture of Experts},
  booktitle    = {Advances in Neural Information Processing Systems 34: Annual Conference
                  on Neural Information Processing Systems 2021, NeurIPS 2021, December
                  6-14, 2021, virtual},
  pages        = {8583--8595},
  year         = {2021},
  bibsource    = {dblp computer science bibliography, https://dblp.org}
}

@article{cai2025survey,
  author       = {Weilin Cai and
                  Juyong Jiang and
                  Fan Wang and
                  Jing Tang and
                  Sunghun Kim and
                  Jiayi Huang},
  title        = {A Survey on Mixture of Experts in Large Language Models},
  journal      = {{IEEE} Trans. Knowl. Data Eng.},
  volume       = {37},
  number       = {7},
  pages        = {3896--3915},
  year         = {2025},
  url          = {https://doi.org/10.1109/TKDE.2025.3554028},
  doi          = {10.1109/TKDE.2025.3554028},
  bibsource    = {dblp computer science bibliography, https://dblp.org}
}

@article{liu2024survey,
  author       = {Jiacheng Liu and
                  Peng Tang and
                  Wenfeng Wang and
                  Yuhang Ren and
                  Xiaofeng Hou and
                  Pheng{-}Ann Heng and
                  Minyi Guo and
                  Chao Li},
  title        = {A Survey on Inference Optimization Techniques for Mixture of Experts
                  Models},
  journal      = {{ACM} Comput. Surv.},
  volume       = {58},
  number       = {10},
  pages        = {247:1--247:37},
  year         = {2026},
  url          = {https://doi.org/10.1145/3794845},
  doi          = {10.1145/3794845},
  bibsource    = {dblp computer science bibliography, https://dblp.org}
}

@article{mu2025comprehensive,
  author       = {Siyuan Mu and
                  Sen Lin},
  title        = {A Comprehensive Survey of Mixture-of-Experts: Algorithms, Theory,
                  and Applications},
  journal      = {CoRR},
  volume       = {abs/2503.07137},
  year         = {2025},
  url          = {https://doi.org/10.48550/arXiv.2503.07137},
  doi          = {10.48550/ARXIV.2503.07137},
  eprinttype   = {arXiv},
  eprint       = {2503.07137},
  bibsource    = {dblp computer science bibliography, https://dblp.org}
}

@inproceedings{mcgill2017deciding,
  author       = {Mason McGill and
                  Pietro Perona},
  editor       = {Doina Precup and
                  Yee Whye Teh},
  title        = {Deciding How to Decide: Dynamic Routing in Artificial Neural Networks},
  booktitle    = {Proceedings of the 34th International Conference on Machine Learning,
                  {ICML} 2017, Sydney, NSW, Australia, 6-11 August 2017},
  series       = {Proceedings of Machine Learning Research},
  pages        = {2363--2372},
  publisher    = {{PMLR}},
  year         = {2017},
  url          = {http://proceedings.mlr.press/v70/mcgill17a.html},
  bibsource    = {dblp computer science bibliography, https://dblp.org}
}

@article{zhang2020dynet,
  author       = {Yikang Zhang and
                  Jian Zhang and
                  Qiang Wang and
                  Zhao Zhong},
  title        = {DyNet: Dynamic Convolution for Accelerating Convolutional Neural Networks},
  journal      = {CoRR},
  volume       = {abs/2004.10694},
  year         = {2020},
  url          = {https://arxiv.org/abs/2004.10694},
  eprinttype   = {arXiv},
  eprint       = {2004.10694},
  bibsource    = {dblp computer science bibliography, https://dblp.org}
}

@article{DBLP:journals/corr/abs-2510-13291,
  author       = {Xuxin Cheng and
                  Ke Zeng and
                  Zhiquan Cao and
                  Linyi Dai and
                  Wenxuan Gao and
                  Fei Han and
                  Ai Jian and
                  Feng Hong and
                  Wenxing Hu and
                  Zihe Huang and
                  Dejian Kong and
                  Jia Leng and
                  Zhuoyuan Liao and
                  Pei Liu and
                  Jiaye Lin and
                  Xing Ma and
                  Jingqing Ruan and
                  Jiaxing Song and
                  Xiaoyu Tan and
                  Ruixuan Xiao and
                  Wenhui Yu and
                  Wenyu Zhan and
                  Haoxing Zhang and
                  Chao Zhou and
                  Hao Zhou and
                  Shaodong Zheng and
                  Ruinian Chen and
                  Siyuan Chen and
                  Ziyang Chen and
                  Yiwen Dong and
                  Yaoyou Fan and
                  Yangyi Fang and
                  Yang Gan and
                  Shiguang Guo and
                  Qi He and
                  Chaowen Hu and
                  Binghui Li and
                  Dailin Li and
                  Xiangyu Li and
                  Yan Li and
                  Chengjian Liu and
                  Xiangfeng Liu and
                  Jiahui Lv and
                  Qiao Ma and
                  Jiang Pan and
                  Cong Qin and
                  Chenxing Sun and
                  Wen Sun and
                  Zhonghui Wang and
                  Abudukelimu Wuerkaixi and
                  Xin Yang and
                  Fangyi Yuan and
                  Yawen Zhu and
                  Tianyi Zhai and
                  Jie Zhang and
                  Runlai Zhang and
                  Yao Xu and
                  Yiran Zhao and
                  Yifan Wang and
                  Xunliang Cai and
                  Yangen Hu and
                  Cao Liu and
                  Lu Pan and
                  Xiaoli Wang and
                  Bo Xiao and
                  Wenyuan Yao and
                  Qianlin Zhou and
                  Benchang Zhu},
  title        = {Higher Satisfaction, Lower Cost: {A} Technical Report on How LLMs
                  Revolutionize Meituan's Intelligent Interaction Systems},
  journal      = {CoRR},
  volume       = {abs/2510.13291},
  year         = {2025},
  url          = {https://doi.org/10.48550/arXiv.2510.13291},
  doi          = {10.48550/ARXIV.2510.13291},
  eprinttype   = {arXiv},
  eprint       = {2510.13291},
  bibsource    = {dblp computer science bibliography, https://dblp.org}
}

@article{DBLP:journals/corr/abs-2603-03314,
  author       = {Xin Yang and
                  Letian Li and
                  Abudukelimu Wuerkaixi and
                  Xuxin Cheng and
                  Cao Liu and
                  Ke Zeng and
                  Xunliang Cai and
                  Wenyuan Jiang},
  title        = {Towards Self-Robust LLMs: Intrinsic Prompt Noise Resistance via CoIPO},
  journal      = {CoRR},
  volume       = {abs/2603.03314},
  year         = {2026},
  url          = {https://doi.org/10.48550/arXiv.2603.03314},
  doi          = {10.48550/ARXIV.2603.03314},
  eprinttype   = {arXiv},
  eprint       = {2603.03314},
  bibsource    = {dblp computer science bibliography, https://dblp.org}
}

@misc{claude4_2025,
  title        = {Introducing {Claude} 4},
  author       = {{Anthropic}},
  year         = {2025},
  howpublished = {\url{https://www.anthropic.com/news/claude-4}},
}

@misc{gpt41_2025,
  title        = {Introducing {GPT-4.1} in the {API}},
  author       = {{OpenAI}},
  year         = {2025},
  howpublished = {\url{https://openai.com/index/gpt-4-1/}},
}

@inproceedings{DBLP:conf/smc/YangTWJJ25,
  author       = {Xin Yang and
                  Bintao Tang and
                  Yuhao Wang and
                  Zimo Ji and
                  Wenyuan Jiang},
  title        = {Can LLMs Write Fast System-Aware Numerical Computation Code?},
  booktitle    = {{IEEE} International Conference on Systems, Man, and Cybernetics,
                  {SMC} 2025, Vienna, Austria, October 5-8, 2025},
  pages        = {672--675},
  publisher    = {{IEEE}},
  year         = {2025},
  url          = {https://doi.org/10.1109/SMC58881.2025.11343280},
  doi          = {10.1109/SMC58881.2025.11343280},
  bibsource    = {dblp computer science bibliography, https://dblp.org}
}
\bibliographystyle{icml2026}*

%%%%%%%%%%%%%%%%%%%%%%%%%%%%%%%%%%%%%%%%%%%%%%%%%%%%%%%%%%%%%%%%%%%%%%%%%%%%%%%
%%%%%%%%%%%%%%%%%%%%%%%%%%%%%%%%%%%%%%%%%%%%%%%%%%%%%%%%%%%%%%%%%%%%%%%%%%%%%%%
% APPENDIX
%%%%%%%%%%%%%%%%%%%%%%%%%%%%%%%%%%%%%%%%%%%%%%%%%%%%%%%%%%%%%%%%%%%%%%%%%%%%%%%
%%%%%%%%%%%%%%%%%%%%%%%%%%%%%%%%%%%%%%%%%%%%%%%%%%%%%%%%%%%%%%%%%%%%%%%%%%%%%%%
\newpage
\appendix
\onecolumn

\section{Proof of \texorpdfstring{\cref{thm:equiv}}{Theorem 4.1}: Extension to Nonlinear Experts}
\label{app:proof}

We extend the equivalence result to two-layer experts with nonlinear activations.

\noindent\textbf{Setup.}
Let $E_k(\mathbf{x}) = \sigma(\mathbf{x} \mathbf{W}_{k,1}^\top) \mathbf{W}_{k,2}^\top$ be a two-layer expert with activation $\sigma$ (e.g., ReLU, GELU). The MoE output is:
\begin{equation}
\moe(\mathbf{x}) = \sum_{k=1}^{N} g_k(\mathbf{x}) \, \sigma(\mathbf{x} \mathbf{W}_{k,1}^\top) \mathbf{W}_{k,2}^\top.
\end{equation}

\noindent\textbf{Linear Case.}
When $\sigma = \mathrm{id}$ (identity), we recover the result in \cref{thm:equiv}:
\begin{align}
\sum_{k} g_k(\mathbf{x}) \, \mathbf{x} \mathbf{W}_{k,1}^\top \mathbf{W}_{k,2}^\top 
&= \mathbf{x} \Bigl( \textstyle\sum_{k} g_k(\mathbf{x}) \, \mathbf{W}_{k,2} \mathbf{W}_{k,1} \Bigr)^{\!\top} \nonumber \\
&= \conv_{1{\times}1}\!\Bigl( \mathbf{x};\, \textstyle\sum_{k} g_k(\mathbf{x}) \, \mathbf{W}_{k,2} \mathbf{W}_{k,1} \Bigr).
\end{align}

\noindent\textbf{Nonlinear Case (ReLU).}
For $\sigma = \mathrm{ReLU}$, we can rewrite element-wise:
\begin{equation}
\mathrm{ReLU}(z) = \indicator(z > 0) \cdot z,
\end{equation}
where $\indicator(\cdot)$ is the indicator function. The ReLU acts as a data-dependent binary gate.

Define the diagonal masking matrix:
\begin{equation}
\mathbf{M}_k(\mathbf{x}) = \mathrm{diag}\bigl(\indicator(\mathbf{x} \mathbf{W}_{k,1}^\top > 0)\bigr) \in \{0,1\}^{\dff \times \dff}.
\end{equation}

Then:
\begin{equation}
\sigma(\mathbf{x} \mathbf{W}_{k,1}^\top) = \mathbf{M}_k(\mathbf{x}) \, (\mathbf{x} \mathbf{W}_{k,1}^\top) = (\mathbf{x} \mathbf{W}_{k,1}^\top) \, \mathbf{M}_k(\mathbf{x}).
\end{equation}

The expert output becomes:
\begin{equation}
E_k(\mathbf{x}) = \mathbf{x} \, \mathbf{W}_{k,1}^\top \, \mathbf{M}_k(\mathbf{x}) \, \mathbf{W}_{k,2}^\top 
= \mathbf{x} \, \bigl(\mathbf{W}_{k,2} \, \mathbf{M}_k(\mathbf{x}) \, \mathbf{W}_{k,1}\bigr)^\top.
\end{equation}

This is a convolution with an input-dependent effective kernel:
\begin{equation}
\mathbf{K}_k^{\mathrm{eff}}(\mathbf{x}) = \mathbf{W}_{k,2} \, \mathbf{M}_k(\mathbf{x}) \, \mathbf{W}_{k,1}.
\end{equation}

The full MoE output:
\begin{equation}
\moe(\mathbf{x}) = \sum_{k=1}^{N} g_k(\mathbf{x}) \, \mathbf{x} \, \bigl(\mathbf{K}_k^{\mathrm{eff}}(\mathbf{x})\bigr)^\top 
= \mathbf{x} \Bigl( \textstyle\sum_{k=1}^{N} g_k(\mathbf{x}) \, \mathbf{K}_k^{\mathrm{eff}}(\mathbf{x}) \Bigr)^{\!\top}.
\end{equation}

This can be viewed as a dynamic convolution with $N$ input-dependent kernels, where both the routing weights $g_k(\mathbf{x})$ and the effective kernels $\mathbf{K}_k^{\mathrm{eff}}(\mathbf{x})$ depend on the input.

\noindent\textbf{Interpretation.}
The nonlinear case reveals that:
\begin{itemize}[leftmargin=*]
  \item Each expert implements a gated linear unit (GLU)-like computation with data-dependent masking.
  \item The MoE output is a weighted combination of masked convolutions.
  \item Both outer gating ($g_k$) and inner gating ($\mathbf{M}_k$) are input-dependent, creating a two-level conditional computation structure.
\end{itemize}

This analysis motivates designing cMoLLM with explicit control over both routing and activation patterns.

\hfill $\square$

%%%%%%%%%%%%%%%%%%%%%%%%%%%%%%%%%%%%%%%%%%%%%%%%%%%%%%%%%%%%%%%%%%%%%%%%%%%%%%%

\section{A Cluster-Structured Toy Model}
\label{app:cluster}

We provide a simple toy model illustrating when MoE-style routing (and hence cMoLLM) is provably beneficial for cluster-structured data, in the spirit of~\citet{chen2022towards}.

\subsection{Problem Setup}

Consider a binary classification problem with input space $\mathbb{R}^d$ and two clusters per class.
Let $\mu_{1,+}, \mu_{1,-}, \mu_{2,+}, \mu_{2,-} \in \mathbb{R}^d$ be four mean vectors, and let $\sigma^2 I$ be a shared covariance.
We define the data distribution as
\begin{align}
  &\mathbf{x} \mid y=+1 \sim \tfrac{1}{2}\mathcal{N}(\mu_{1,+}, \sigma^2 I) + \tfrac{1}{2}\mathcal{N}(\mu_{2,+}, \sigma^2 I), \\
  &\mathbf{x} \mid y=-1 \sim \tfrac{1}{2}\mathcal{N}(\mu_{1,-}, \sigma^2 I) + \tfrac{1}{2}\mathcal{N}(\mu_{2,-}, \sigma^2 I),
\end{align}
with prior $\mathbb{P}(y=+1)=\mathbb{P}(y=-1)=\tfrac{1}{2}$.
We assume that clusters $(\mu_{1,+},\mu_{1,-})$ and $(\mu_{2,+},\mu_{2,-})$ are well separated and lie in different ``regions'' of the input space.

Formally, suppose there exists a unit vector $\mathbf{u} \in \mathbb{R}^d$ and scalars $a < b$ such that
\begin{align}
  \mathbf{u}^\top \mu_{1,+}, \mathbf{u}^\top \mu_{1,-} &< a - \gamma, \\
  \mathbf{u}^\top \mu_{2,+}, \mathbf{u}^\top \mu_{2,-} &> b + \gamma,
\end{align}
for some margin $\gamma > 0$, and that the Bayes-optimal decision boundary within each cluster-pair is approximately linear in a (potentially different) direction.

\subsection{Expressivity of a Single Linear Classifier}

Let $f_{\mathrm{lin}}(\mathbf{x}) = \mathrm{sign}(\mathbf{w}^\top \mathbf{x})$ be a linear classifier.
Because the two class-conditional mixtures overlap across clusters, a single hyperplane must simultaneously separate both $(\mu_{1,+},\mu_{1,-})$ and $(\mu_{2,+},\mu_{2,-})$.
When the optimal separating directions within the two regions are sufficiently misaligned, any single hyperplane incurs a non-negligible error.

The following statement summarizes this limitation at a high level.

\begin{proposition}[Limitation of Single Linear Classifier]
\label{prop:single_linear_limit}
Under the cluster separation conditions above, suppose that the optimal separating directions for the first and second cluster-pairs differ by an angle of at least $\theta_0 > 0$.
Then there exists a constant $\varepsilon_0 = \varepsilon_0(\theta_0, \gamma, \sigma) > 0$ such that any linear classifier $f_{\mathrm{lin}}(\mathbf{x}) = \mathrm{sign}(\mathbf{w}^\top \mathbf{x})$ has misclassification error at least $\varepsilon_0$.
\end{proposition}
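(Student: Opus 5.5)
The plan is to work directly with the closed form of the error of a homogeneous linear classifier under a Gaussian mixture. I will first establish a (configuration-dependent) positive lower bound by a compactness argument, and then upgrade it to a bound depending only on $(\theta_0,\gamma,\sigma)$ using the angular separation hypothesis.

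\textbf{Step 1: closed-form error.} Since $\mathrm{sign}(\mathbf{w}^\top\mathbf{x})$ is invariant under positive rescaling of $\mathbf{w}$, we may restrict to $\|\mathbf{w}\|_2=1$. For a component $\mathcal{N}(\mu,\sigma^2 I)$ with label $y$, the projection $\mathbf{w}^\top\mathbf{x}$ is $\mathcal{N}(\mathbf{w}^\top\mu,\sigma^2)$. The misclassification probability on that component is therefore $\Phi\bigl(-y\,\mathbf{w}^\top\mu/\sigma\bigr)$, where $\Phi$ is the standard normal CDF. Hence
\begin{equation}
  \mathrm{err}(\mathbf{w}) = \tfrac14\sum_{j\in\{1,2\}}\Bigl[\Phi\bigl(-\mathbf{w}^\top\mu_{j,+}/\sigma\bigr)+\Phi\bigl(\mathbf{w}^\top\mu_{j,-}/\sigma\bigr)\Bigr].
\end{equation}

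\textbf{Step 2: qualitative existence of $\varepsilon_0$.} The function $\mathrm{err}$ is continuous on the unit sphere $S^{d-1}$, which is compact, so it attains its minimum. Each term is strictly positive because $\Phi$ never vanishes; this is the full-support property of the Gaussian noise. Therefore $\varepsilon_0:=\min_{S^{d-1}}\mathrm{err}>0$. This already proves a version of \cref{prop:single_linear_limit}, but with $\varepsilon_0$ depending on the full configuration of means rather than only on $(\theta_0,\gamma,\sigma)$.

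\textbf{Step 3: quantitative bound from the angular hypothesis.} The statement's hypothesis about ``optimal separating directions'' is informal, so I would make it precise by parametrizing each cluster-pair as
\[
  \mu_{j,\pm}=\mathbf{c}_j\pm\delta_j\mathbf{v}_j,\qquad \|\mathbf{v}_j\|_2=1,
\]
and assuming $\angle(\mathbf{v}_1,\mathbf{v}_2)\ge\theta_0$.

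\emph{Angle step.} By the triangle inequality for angles on the sphere, every unit $\mathbf{w}$ satisfies $\angle(\mathbf{w},\mathbf{v}_j)\ge\theta_0/2$ for at least one index $j$.

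\emph{Error on that pair.} Fix this $j$ and write $t=\mathbf{w}^\top\mathbf{c}_j$ and $s=\delta_j\,\mathbf{w}^\top\mathbf{v}_j$. The two class components of pair $j$ project to $\mathcal{N}(t+s,\sigma^2)$ and $\mathcal{N}(t-s,\sigma^2)$. The error on pair $j$ is then
\[
  \tfrac12\bigl[\Phi(-(t+s)/\sigma)+\Phi((t-s)/\sigma)\bigr].
\]
For fixed $s$, this is minimized at $t=0$, giving at least $\Phi(-|s|/\sigma)$. On the other hand, $|s|\le\delta_j\cos(\theta_0/2)$. Combining these with the mixture weight $\tfrac12$ of pair $j$ gives
\[
  \mathrm{err}(\mathbf{w})\ge\tfrac12\,\Phi\bigl(-\delta_{\max}\cos(\theta_0/2)/\sigma\bigr).
\]

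\textbf{Main obstacle.} This bound depends on the within-pair separations $\delta_j$, which are not controlled by $\gamma$ alone. Moreover, as $\delta_j\to\infty$ even a slightly misaligned hyperplane achieves vanishing error, so no bound in terms of $(\theta_0,\gamma,\sigma)$ alone can hold without an extra normalization. I would therefore state and prove the result under an explicit assumption $\delta_j\le\Delta$. The constant then becomes
\[
  \varepsilon_0=\tfrac12\,\Phi\bigl(-\Delta\cos(\theta_0/2)/\sigma\bigr).
\]
Alternatively, one can fall back on the compactness constant from Step 2. The margin $\gamma$ enters only through the interpretation that the two pairs occupy distinct regions, so that a single homogeneous hyperplane cannot exploit $\mathbf{c}_j$ to serve both pairs; the bound above does not actually need it.
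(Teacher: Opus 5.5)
The paper gives no proof of this proposition. It only says the argument follows ``standard arguments for mixtures of Gaussians'' and defers to \citet{chen2022towards}, so your proposal has to stand on its own.

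Your Steps~1 and~2 and the spherical triangle-inequality step are correct. Step~2 holds for every configuration, including ones where the routed classifier also has positive error, so it says nothing specific about a limitation of linear classifiers; the quantitative version is what matters.

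Your diagnosis of the statement is also right: no $\varepsilon_0$ depending only on $(\theta_0,\gamma,\sigma)$ can exist. Here is a concrete construction for $d\ge 3$ and $\theta_0<\pi$:
take $\mathbf{u}\perp\mathrm{span}(\mathbf{v}_1,\mathbf{v}_2)$, $\mathbf{c}_1=-M\mathbf{u}$, $\mathbf{c}_2=M\mathbf{u}$ with $M$ large enough for the margin, $\delta_1=\delta_2=\delta$, $\angle(\mathbf{v}_1,\mathbf{v}_2)=\theta_0$, and $\mathbf{w}$ the normalized bisector of $\mathbf{v}_1,\mathbf{v}_2$.
Then $\mathbf{w}^\top\mathbf{c}_j=0$ and $\mathrm{err}(\mathbf{w})=\Phi(-\delta\cos(\theta_0/2)/\sigma)\to 0$ as $\delta\to\infty$. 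So the literal statement is false, and a normalization such as your $\delta_j\le\Delta$ is genuinely needed.

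One part of Step~3 is wrong as written, although your final bound survives. From $\angle(\mathbf{w},\mathbf{v}_j)\ge\theta_0/2$ you only get the one-sided bound $s\le\delta_j\cos(\theta_0/2)$, not $|s|\le\delta_j\cos(\theta_0/2)$; take $\mathbf{w}=-\mathbf{v}_j$, where $|s|=\delta_j$. Likewise, $t=0$ minimizes the pair error only when $s\ge 0$. For $s<0$ the value at $t=0$ is $\Phi(|s|/\sigma)>\tfrac12$, while letting $|t|\to\infty$ brings it down to $\tfrac12$.

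The fix is a sign split:
\begin{itemize}
\item If $s\le 0$, put $A=-(t+s)/\sigma$ and $B=(t-s)/\sigma$. Since $A+B=-2s/\sigma\ge 0$, you get $\Phi(A)+\Phi(B)\ge\Phi(A)+\Phi(-A)=1$, so the conditional error on pair $j$ is at least $\tfrac12$.
\item If $s>0$, your minimization at $t=0$ is valid and gives $\Phi(-s/\sigma)\ge\Phi(-\delta_j\cos(\theta_0/2)/\sigma)$.
\end{itemize}
Because $\cos(\theta_0/2)\ge 0$ for $\theta_0\le\pi$, the latter quantity is at most $\tfrac12$. Both cases therefore yield your constant $\tfrac12\Phi(-\Delta\cos(\theta_0/2)/\sigma)$, which is correct.

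Be aware of what the repaired statement cannot do. For $\theta_0<\pi$ your constant tends to $0$ as $\sigma\to 0$. By contrast, \cref{thm:toy_cluster} appeals to a floor that persists as $\gamma$ grows and $\sigma$ shrinks. The construction above shows the angle hypothesis cannot supply such a floor at all, since there all four means lie strictly on their correct sides of one homogeneous hyperplane.

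What does supply a floor is XOR-type non-separability of the means. Suppose that for every unit $\mathbf{w}$ some component mean $\mu$ with label $y\in\{\pm1\}$ has $y\,\mathbf{w}^\top\mu\le 0$. Then that component contributes $\tfrac14\Phi(-y\,\mathbf{w}^\top\mu/\sigma)\ge\tfrac18$ to your Step~1 formula, uniformly in $\sigma$ and $\gamma$. Your Step~3 bound cannot detect this, because minimizing over $t$ for fixed $s$ decouples the centers $\mathbf{c}_j$ from $\mathbf{w}$.
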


The proof follows standard arguments for mixtures of Gaussians with incompatible linear separators and is omitted for brevity; see~\citet{chen2022towards}.

\subsection{Two-Expert MoE/cMoLLM Construction}

Now consider a two-expert MoE (or cMoLLM) model with a simple router that partitions space along direction $\mathbf{u}$:
\begin{equation}
  g_1(\mathbf{x}) = \indicator(\mathbf{u}^\top \mathbf{x} \le \tau), \quad
  g_2(\mathbf{x}) = \indicator(\mathbf{u}^\top \mathbf{x} > \tau),
\end{equation}
for some threshold $\tau \in (a,b)$.
Let each expert be a linear classifier specialized to one region:
\begin{align}
  E_1(\mathbf{x}) &= \mathrm{sign}(\mathbf{w}_1^\top \mathbf{x}), \\
  E_2(\mathbf{x}) &= \mathrm{sign}(\mathbf{w}_2^\top \mathbf{x}),
\end{align}
with $\mathbf{w}_1$ optimized for the first cluster-pair and $\mathbf{w}_2$ for the second.
The overall prediction is
\begin{equation}
  f_{\mathrm{moe}}(\mathbf{x}) = g_1(\mathbf{x}) E_1(\mathbf{x}) + g_2(\mathbf{x}) E_2(\mathbf{x}).
\end{equation}

\begin{theorem}[Toy Cluster Model: Benefit of Routing]
\label{thm:toy_cluster}
In the cluster-structured setting above, there exist parameters $(\mathbf{u},\tau,\mathbf{w}_1,\mathbf{w}_2)$ such that the two-expert MoE (or cMoLLM) classifier $f_{\mathrm{moe}}$ attains misclassification error arbitrarily close to the Bayes-optimal error as the margin $\gamma$ increases and the covariance $\sigma^2$ decreases, while any single linear classifier $f_{\mathrm{lin}}$ suffers error at least $\varepsilon_0 > 0$ as in \cref{prop:single_linear_limit}.
\end{theorem}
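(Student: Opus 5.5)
The proof splits into two independent halves. The lower bound for $f_{\mathrm{lin}}$ is exactly \cref{prop:single_linear_limit}, which I will invoke as stated; the $\theta_0$-misalignment hypothesis is carried over unchanged. All the new work goes into the upper bound for $f_{\mathrm{moe}}$. First I fix the interpretation of ``approximately linear'': for two isotropic Gaussians $\mathcal{N}(\mu_{j,+},\sigma^2 I)$ and $\mathcal{N}(\mu_{j,-},\sigma^2 I)$ with equal weights, the pairwise Bayes rule is \emph{exactly} the affine rule $\mathrm{sign}\bigl((\mu_{j,+}-\mu_{j,-})^\top \mathbf{x} - c_j\bigr)$. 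So I set $\mathbf{w}_j \propto \mu_{j,+}-\mu_{j,-}$, and absorb the offset $c_j$ by appending a constant coordinate to $\mathbf{x}$ (the standard homogenization, which leaves the router unaffected). Let $\mathrm{Bayes}_j$ denote the resulting within-pair Bayes error. I then take the router direction $\mathbf{u}$ from the separation hypothesis and put $\tau=(a+b)/2$.

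\textbf{Step 1 (routing error).} For any component centered at some $\mu_{1,\pm}$, the projection $\mathbf{u}^\top\mathbf{x}$ is $\mathcal{N}(\mathbf{u}^\top\mu_{1,\pm},\sigma^2)$, since $\|\mathbf{u}\|=1$. Its mean lies below $a-\gamma<\tau-\gamma$, so the probability of being routed to expert $2$ is at most $\bar\Phi(\gamma/\sigma)\le e^{-\gamma^2/(2\sigma^2)}$. The same bound holds symmetrically for the second cluster-pair. Call this tail $\delta(\gamma,\sigma)$.

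\textbf{Step 2 (upper bound for $f_{\mathrm{moe}}$).} Condition on the latent cluster index $j\in\{1,2\}$, each with probability $1/2$. On the event that $\mathbf{x}$ is routed correctly, $f_{\mathrm{moe}}=E_j$, and $E_j$ errs with probability $\mathrm{Bayes}_j$. This gives
\[
\mathrm{err}(f_{\mathrm{moe}})\le \tfrac12(\mathrm{Bayes}_1+\mathrm{Bayes}_2)+\delta .
\]

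\textbf{Step 3 (lower bound on the global Bayes error).} This step is the main obstacle: I must show that the Bayes error $\mathrm{Bayes}^\star$ of the full four-component mixture is not much smaller than $\tfrac12(\mathrm{Bayes}_1+\mathrm{Bayes}_2)$. I would argue pointwise through the posterior. On the region $\{\mathbf{u}^\top\mathbf{x}\le\tau\}$, the Bayes error density is $\tfrac14\min\bigl(p_{1,+}+p_{2,+},\,p_{1,-}+p_{2,-}\bigr)$, which is at least $\tfrac14\min(p_{1,+},p_{1,-})$. Integrating over this region, and likewise for the other half-space with the second pair, yields
\[
\mathrm{Bayes}^\star \ge \tfrac12(\mathrm{Bayes}_1+\mathrm{Bayes}_2) - \delta ,
\]
because restricting each within-pair Bayes integral to the ``correct'' half-space loses at most the mass of the tail from Step 1. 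Combining Steps 2 and 3 gives $\mathrm{err}(f_{\mathrm{moe}})\le \mathrm{Bayes}^\star+2\delta$. Since $\delta=e^{-\gamma^2/(2\sigma^2)}$, this gap tends to $0$ as $\gamma$ grows or $\sigma$ shrinks, which proves the first claim.

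Finally, I need to check compatibility with the lower bound: $\varepsilon_0(\theta_0,\gamma,\sigma)$ from \cref{prop:single_linear_limit} should stay bounded away from $\mathrm{Bayes}^\star+2\delta$ in the regime considered. To ensure this, I would take the limit along a family in which $\theta_0$ and the within-pair margins relative to $\sigma$ are held fixed, and only the inter-region separation $\gamma/\sigma$ grows. Along such a family the misalignment penalty $\varepsilon_0$ does not vanish, while the MoE gap $2\delta$ does.
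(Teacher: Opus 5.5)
Your proposal is correct and follows the same route as the paper's sketch: route along $\mathbf{u}$ with $\tau\in(a,b)$, bound misrouting by a Gaussian tail $e^{-\gamma^2/(2\sigma^2)}$, use within-pair linear experts, and cite \cref{prop:single_linear_limit} for $f_{\mathrm{lin}}$; your Step~3 comparison against the global Bayes error makes precise what the paper only asserts. The one soft spot is your final paragraph, which is not needed for the literal statement: holding the within-pair signal-to-noise ratio fixed is the right regime, but strict separation needs $\varepsilon_0>\mathrm{Bayes}^\star+2\delta$, not just $\varepsilon_0$ staying positive (there $\mathrm{Bayes}^\star$ stays positive too), and \cref{prop:single_linear_limit} does not supply this, whereas a direct computation with $\Delta_j=\mu_{j,+}-\mu_{j,-}$ does, since every affine classifier satisfies $\mathrm{err}(f_{\mathrm{lin}})\ge\tfrac12(\mathrm{Bayes}_1+\mathrm{Bayes}_2)+\tfrac12\min_j\bigl[\bar\Phi\bigl(\|\Delta_j\|\cos(\theta_0/2)/(2\sigma)\bigr)-\bar\Phi\bigl(\|\Delta_j\|/(2\sigma)\bigr)\bigr]$, which exceeds your Step~2 bound $\tfrac12(\mathrm{Bayes}_1+\mathrm{Bayes}_2)+\delta$ by a $\gamma$-independent margin once $\delta$ is small.
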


\begin{proof}[Proof Sketch]
Because $\mathbf{u}$ separates the two cluster regions with margin $\gamma$, choosing $\tau \in (a,b)$ ensures that, with high probability (increasing as $\gamma/\sigma$ grows), samples from $(\mu_{1,+},\mu_{1,-})$ fall into the first region and samples from $(\mu_{2,+},\mu_{2,-})$ into the second.
Within each region, the problem reduces to a two-component Gaussian mixture that is linearly separable by an appropriately chosen $\mathbf{w}_i$.
Thus, the routed classifier $f_{\mathrm{moe}}$ can implement the Bayes-optimal decision rule up to an exponentially small error in $\gamma^2/\sigma^2$.
On the other hand, \cref{prop:single_linear_limit} implies that any single hyperplane must compromise between the two misaligned regions, incurring a constant error floor $\varepsilon_0$ even as $\gamma$ grows.
Hence, for sufficiently well-separated clusters, the routed MoE/cMoLLM strictly outperforms any single linear classifier.
\renewcommand{\qedsymbol}{}
\end{proof}

This toy example provides a simple, concrete setting where conditional computation (and by equivalence, dynamic convolution) is provably beneficial, aligning with the broader conclusions of~\citet{chen2022towards}.

%%%%%%%%%%%%%%%%%%%%%%%%%%%%%%%%%%%%%%%%%%%%%%%%%%%%%%%%%%%%%%%%%%%%%%%%%%%%%%%

\section{Implementation Details}
\label{app:implementation}

\noindent\textbf{Kernel Generator.}
Each stream kernel $\mathbf{K}_k$ is initialized with Xavier initialization. We use no shared base or low-rank factorization in the main experiments.

\noindent\textbf{Gating Network.}
The gating network is a single linear layer $\Wg \in \mathbb{R}^{d \times N}$. For \texttt{context\_aware} gating, we add layer normalization before the projection. For \texttt{multi\_head} gating, we use $H=4$ heads with dimension $d/H$ each.

\noindent\textbf{Optimization.}
We use AdamW with $\beta_1=0.9$, $\beta_2=0.95$, weight decay $0.1$, and a cosine learning-rate schedule with linear warmup over the first 2000 steps.

%%%%%%%%%%%%%%%%%%%%%%%%%%%%%%%%%%%%%%%%%%%%%%%%%%%%%%%%%%%%%%%%%%%%%%%%%%%%%%%

\section{Hyperparameter Sensitivity}
\label{app:hyperparams}

\begin{table}[h]
  \caption{Sensitivity to key hyperparameters (3 seeds; best value over mean $\pm$ std).}
  \label{tab:sensitivity}
  \vskip 0.1in
  \begin{center}
  \begin{small}
  \begin{tabular}{@{}lcc@{}}
    \toprule
    \textbf{Hyperparameter} & \textbf{Range} & \textbf{Best Value} \\
    \midrule
    Balance coefficient $\alpha$ & $\{0.001, 0.01, 0.1\}$ & 0.01 \\
    Number of streams $N$ & $\{1, 2, 4, 8\}$ & 8 \\
    \bottomrule
  \end{tabular}
  \end{small}
  \end{center}
\end{table}

\Cref{tab:sensitivity} summarizes the tested ranges and best values. cMoLLM is relatively robust to hyperparameter choices. The load-balancing coefficient $\alpha$ has the largest impact; values too small encourage stream collapse, while values too large hurt performance.

%%%%%%%%%%%%%%%%%%%%%%%%%%%%%%%%%%%%%%%%%%%%%%%%%%%%%%%%%%%%%%%%%%%%%%%%%%%%%%%

\section{Plain Language Summary}
\label{app:summary}

This paper introduces \textbf{cMoLLM}, a new way to scale large language models (LLMs) more efficiently.
Traditional LLMs activate all parameters for every token, making training and inference expensive as models grow larger.
Mixture-of-Experts (MoE) approaches try to address this by routing tokens to only a subset of ``expert'' networks, but most prior work applies this only to feed-forward layers and uses discrete Top-$K$ routing, which can be unstable.
Our key insight is that MoE-style mixture layers can be exactly rewritten as dynamic convolutions: each expert corresponds to a convolution kernel, and the router mixes these kernels based on the input.
We use this equivalence to design cMoLLM, which applies mixture routing to the entire LLM pipeline (not just feed-forward layers) using soft, fully differentiable gating---no Top-$K$, no virtual tokens, no auxiliary prediction branches.
cMoLLM maintains a small set of parallel ``streams,'' each with its own convolution kernel; a lightweight gating network produces input-dependent mixture weights, and the mixed kernel is applied via standard pointwise convolution, yielding parameter-efficient capacity scaling with stable training dynamics.
On GPT-2--style models trained on FineWeb, cMoLLM improves perplexity and GLUE accuracy under matched compute, with better stream utilization and training stability than prior pipeline-level scaling methods like ParaScale and AltUp.
The convolution-based design enables efficient implementation and favorable scaling in practice.

%%%%%%%%%%%%%%%%%%%%%%%%%%%%%%%%%%%%%%%%%%%%%%%%%%%%%%%%%%%%%%%%%%%%%%%%%%%%%%%

\section{Reproducibility Statement}
\label{app:reproducibility}

Our implementation is based on PyTorch and follows standard Transformer architectures.
The codebase will be made publicly available upon acceptance, including full model implementation (cMoLLM blocks, gating networks, training loop), training scripts with hyperparameter configurations, evaluation scripts for language modeling and downstream tasks, and preprocessing scripts for the FineWeb dataset.

We use FineWeb~\citep{penedo2024fineweb} for pretraining, which is publicly available.
For downstream evaluation, we use standard benchmarks GLUE~\citep{wang2018glue} and SQuAD~\citep{rajpurkar2016squad}, all of which are publicly available.

All hyperparameters are reported in \cref{sec:setup,app:implementation,app:hyperparams}.
Implementation details and hyperparameter sensitivity analysis are provided in \cref{app:implementation,app:hyperparams}.
We use a GPT-2--style architecture~\citep{radford2019language} with 12 layers, hidden dimension 768, FFN dimension 3072, sequence length 4096, and learning rate $6 \times 10^{-5}$ with cosine schedule and 2000-step warmup.
We use AdamW optimizer with $\beta_1=0.9$, $\beta_2=0.95$, and weight decay $0.1$.
For cMoLLM, we evaluate stream counts $N \in \{1, 2, 4, 8\}$; best validation loss in \cref{tab:main_results} is achieved at $n{=}8$ for \texttt{multi\_head}. We use $N{=}8$ as a representative configuration in sensitivity and scaling tables. Load-balancing coefficient $\alpha = 0.01$.

Experiments were conducted on NVIDIA A100 GPUs.
Training a single cMoLLM model ($N=4$) for the reported experiments requires approximately 8 A100 GPU-days.
Exact hardware specifications and software versions (PyTorch, CUDA, etc.) will be documented in the code repository.

Language modeling metrics (loss, perplexity) are computed on validation splits.
Downstream task evaluation follows standard protocols: GLUE use development set accuracy; SQuAD uses F1 score.
All experimental results use \textbf{3 random seeds} and are reported as \textbf{mean $\pm$ standard deviation (std)} throughout the paper (main tables, scaling tables, and appendix).
A plain language summary is provided in \cref{app:summary}.

We compare against a dense GPT-2 baseline (standard Transformer), ParaScale~\citep{chen2025parallel} (reimplemented with virtual token streams), and AltUp~\citep{baykal2023alternating} (reimplemented with auxiliary prediction branch).
All baselines use identical training data, hyperparameters where applicable, and evaluation protocols to ensure fair comparison.
Reproducibility details are provided in \cref{app:reproducibility}.

\section{Use of LLM}

The authors used generative AI tools (Grammarly, ChatGPT) only for grammar checking and language polishing. All technical content, experimental design, data analysis, and conclusions were generated and verified by the human authors. The use of AI tools does not affect the originality or authorship of this work.
\end{document}